\documentclass[11pt]{article}

\usepackage{fullpage}

\usepackage[T1]{fontenc}
\usepackage[utf8]{inputenc}
\usepackage{lmodern}
\usepackage{microtype}

\usepackage{amsmath,mathtools}
\usepackage{amsfonts,amssymb}
\usepackage{amsthm,thmtools}
\usepackage{bm}
\usepackage{bbm}
\usepackage{accents}

\usepackage{graphicx}
\usepackage{xcolor}
\usepackage{booktabs}
\usepackage{array}
\usepackage{tabularx}
\usepackage{threeparttable}
\usepackage{caption}
\usepackage{subcaption}
\usepackage{pifont}

\usepackage{enumitem}

\usepackage{algorithm}
\usepackage{algpseudocode}

\usepackage[round]{natbib}
\usepackage[colorlinks=true,citecolor=blue,linkcolor=red,urlcolor=magenta]{hyperref}
\usepackage{cleveref}

\crefformat{equation}{(#2#1#3)}
\Crefformat{equation}{(#2#1#3)}
\crefrangeformat{equation}{(#3#1#4)--(#5#2#6)}
\Crefrangeformat{equation}{(#3#1#4)--(#5#2#6)}
\crefmultiformat{equation}{(#2#1#3)}{ and (#2#1#3)}{, (#2#1#3)}{, and (#2#1#3)}
\Crefmultiformat{equation}{(#2#1#3)}{ and (#2#1#3)}{, (#2#1#3)}{, and (#2#1#3)}

\newtheorem{theorem}{Theorem}
\newtheorem{lemma}[theorem]{Lemma}
\newtheorem{proposition}[theorem]{Proposition}

\newtheorem{assumption}[theorem]{Assumption}
\newtheorem{remark}[theorem]{Remark}

\crefname{assumption}{Assumption}{Assumptions}
\Crefname{assumption}{Assumption}{Assumptions}

\newcommand{\para}[1]{\paragraph{#1.}}

\newcommand{\R}{\mathbb{R}}

\newcommand{\I}{\mathbf{I}}

\DeclarePairedDelimiterX{\cbrp}[1]{\lbrace}{\rbrace}{#1}
\DeclarePairedDelimiterX{\brp}[1]{(}{)}{#1}
\DeclarePairedDelimiterX{\sqbrp}[1]{[}{]}{#1}
\DeclarePairedDelimiterX{\normp}[1]{\lVert}{\rVert}{#1}
\DeclarePairedDelimiterX{\absp}[1]{\lvert}{\rvert}{#1}
\DeclarePairedDelimiterX{\ipp}[1]{\langle}{\rangle}{#1}

\newcommand{\cbr}[1]{\ensuremath{\mathchoice{\cbrp*{#1}}{\cbrp{#1}}{\cbrp{#1}}{\cbrp{#1}}}}
\newcommand{\br}[1]{\ensuremath{\mathchoice{\brp*{#1}}{\brp{#1}}{\brp{#1}}{\brp{#1}}}}
\newcommand{\sqbr}[1]{\ensuremath{\mathchoice{\sqbrp*{#1}}{\sqbrp{#1}}{\sqbrp{#1}}{\sqbrp{#1}}}}
\newcommand{\norm}[1]{\ensuremath{\mathchoice{\normp*{#1}}{\normp{#1}}{\normp{#1}}{\normp{#1}}}}
\newcommand{\abs}[1]{\ensuremath{\mathchoice{\absp*{#1}}{\absp{#1}}{\absp{#1}}{\absp{#1}}}}

\DeclareMathOperator*{\argmax}{arg\,max}
\DeclareMathOperator*{\argmin}{arg\,min}

\DeclareMathOperator{\diag}{diag}

\renewcommand{\P}{\mathbb{P}}
\newcommand{\E}{\mathbb{E}}

\newcommand{\ind}[1]{\ensuremath{\mathbf{1}\,\cbr{#1}}}
\DeclareMathOperator{\Bern}{Bern}

\DeclareMathOperator{\Unif}{Unif}

\newcommand{\KL}{D_{\mathrm{KL}}}

\renewcommand{\v}[1]{\boldsymbol{#1}}
\newcommand{\m}[1]{\mathbf{#1}}
\newcommand{\tp}{^{\mathsf{T}}}

\renewcommand{\cal}[1]{\mathcal{#1}}

\newcommand{\what}[1]{\widehat{#1}}
\newcommand{\wtilde}[1]{\widetilde{#1}}

\DeclareRobustCommand{\term}[1]{%
  \ifmmode
    A_{\text{#1}}%
  \else
    term~\ensuremath{A_{\text{#1}}}%
  \fi
}

\newcommand{\eps}{\epsilon}

\DeclarePairedDelimiterX{\parens}[1]{(}{)}{#1}

\algnewcommand\algorithmicinit{\textbf{initialize}}
\algnewcommand\Init{\item[\algorithmicinit]}
\algnewcommand\algorithmicinput{\textbf{input}}
\algnewcommand\Input{\item[\algorithmicinput]}

\newcommand{\note}[1]{{\color{blue}#1}}

\title{
Algorithm for Contextual Queueing Bandits with Rate-Optimal Queue Length Regret
}

\author{
Seoungbin Bae$^{1}$ \quad Dabeen Lee$^{2}$\\[0.3em]
$^{1}$Department of Industrial \& Systems Engineering, KAIST\\
$^{2}$Department of Mathematical Sciences, Seoul National University\\
\texttt{sbbae31@kaist.ac.kr, dabeenl@snu.ac.kr}
}

\date{}

\begin{document}
\maketitle

\begin{abstract}
Contextual queueing bandits provide a framework for learning to schedule heterogeneous jobs under unknown context-dependent service rates.
Under stochastic contexts, existing algorithms achieve $\wtilde{\cal O}(T^{-1/4})$ queue length regret, defined as the expected difference between the learner's and oracle's queue lengths at horizon $T$.
In this paper, we improve this rate to $\wtilde{\cal O}(T^{-1/2})$. The key observation is that random exploration is needed only up to a carefully chosen cutoff round, rather than throughout the entire horizon.
We propose CQB-$\eta$-2, a three-phase algorithm: (i) pure random exploration to construct an initial estimator, (ii) $\eta$-random exploration combined with a UCB rule to continue learning while maintaining negative drift, and (iii) pure UCB after the exploration cutoff.
Our proof decomposes the queue length regret at the cutoff round. Before the cutoff, negative drift suppresses queue length differences caused by suboptimal choices. After the cutoff, the first two phases provide sufficient random exploration samples, ensuring that UCB decisions incur small departure-rate gaps. Combining these two bounds yields queue length regret of order $\wtilde{\cal O}(T^{-1/2})$.
We further prove a minimax lower bound of order $\Omega(T^{-1/2})$. The proof constructs two hard instances that are statistically indistinguishable up to the final service decision, and uses a queue-specific coupling argument to convert the resulting testing error into queue length regret. Together, our upper and lower bounds characterize the minimax dependence on the horizon $T$ up to logarithmic factors.
\end{abstract}

\section{Introduction}
\label{sec:introduction}

Queueing systems play an important role in modern service platforms, including cloud computing \citep{vilaplana2014queuing}, online service systems \citep{andrews2004scheduling}, call centers \citep{koole2002queueing}, and multi-LLM services \citep{mitzenmacher2025queueing}.
In these systems, jobs arrive over time, service capacity is limited, and scheduling decisions must be made online.
A central challenge is that jobs are heterogeneous and carry contexts, while the corresponding context-dependent service rates are often unknown in advance.
This motivates learning-while-scheduling policies that infer unknown service rates from observed departures while keeping the queue stable \citep{krishnasamy2016regret,kim2024queueing}.

Queueing bandits formalize learning-while-scheduling when service rates are unknown \citep{krishnasamy2016regret,krishnasamy2021learning}.
In a discrete-time queueing bandit problem, the learner selects which server or job-server pair to use, observes binary departure feedback, and uses this feedback to improve future scheduling decisions.
The objective is not only to identify choices with large departure probabilities, but also to control the queue length, because each suboptimal decision can affect future queue states.
However, classical queueing bandits typically do not account for job contexts: each action has a fixed unknown departure probability, independent of the features of the jobs currently in the queue.

Contextual queueing bandits address this limitation by combining queueing bandits with contextual bandit models.
In this framework, the departure probability of a job-server pair is modeled as a logistic function of its feature vector and an unknown parameter \citep{bae2026queue,bae2026learning}, as in logistic and generalized linear bandits \citep{filippi2010parametric,li2017provably,faury2020improved}.

In this paper, we study contextual queueing bandits under stochastic contexts in a discrete-time system with a single queue and $K$ servers.
In each round, the learner observes the contexts of jobs in the queue, selects a job-server pair to process, observes whether the selected job departs, and a new job may arrive.
The performance measure is queue length regret, defined as the expected difference between the learner's and oracle's queue lengths at horizon $T$, where the oracle policy knows the true departure probabilities.
Unlike standard bandit regret, which accumulates reward losses over time, queue length regret compares the queue states at horizon $T$.
The closest prior works on contextual queueing bandits \citep{bae2026queue,bae2026learning} establish $\wtilde{\cal O}(T^{-1/4})$ queue length regret under stochastic contexts.
This leaves open whether the $T^{-1/4}$ rate is intrinsic to contextual queueing bandits or a consequence of the existing algorithmic design and analysis.

We answer this question by showing that the queue length regret can be improved to $\wtilde{\cal O}(T^{-1/2})$.
Our algorithm, CQB-$\eta$-2, is based on the principle that random exploration should be used to learn the unknown service model, but need not be maintained until the end of the horizon.
This differs from previous algorithms, which maintain random exploration throughout the entire horizon.
We also prove a minimax lower bound of order $\Omega(T^{-1/2})$, showing that the dependence on $T$ is tight up to logarithmic factors.

Our contributions are summarized as follows:
\begin{itemize}
    \item We propose CQB-$\eta$-2, a three-phase contextual queueing bandit algorithm motivated by the idea of cutting off random exploration after sufficient samples are collected.
    The algorithm first performs pure random exploration to construct an initial estimator, then uses a mixture of $\eta$-random exploration and a UCB rule to continue learning while maintaining negative drift, and finally cuts off random exploration and uses only the UCB rule.

    \item We prove that CQB-$\eta$-2 achieves queue length regret of order $\wtilde{\cal O}(T^{-1/2})$ under stochastic contexts, improving the previous $\wtilde{\cal O}(T^{-1/4})$ rate.
    This shows that random exploration until the end of the horizon is not necessary for achieving the $T^{-1/2}$ order.

    \item The improvement in queue length regret is due to the clever design of phase transitions. In particular, we stop pure exploration at some cutoff point. First, although the learner may still make suboptimal decisions before the cutoff point due to random exploration, we observe that a negative drift makes the resulting effect on queue length small towards the end of horizon $T$.
    After the cutoff point, we would collect $\Theta(T)$ random exploration samples, in which case pure UCB decisions have departure-rate gaps of order $T^{-1/2}$.
    This separation explains why stopping random exploration can improve the queue length regret rate.

    \item We establish a minimax lower bound of order $\Omega(T^{-1/2})$.
    The proof constructs two hard  
    instances that are statistically indistinguishable up to the final service decision and uses a queue-specific coupling argument to convert the resulting testing error into queue length regret.
    This shows that the dependence on $T$ is tight up to logarithmic factors.

    \item We provide simulations demonstrating that CQB-$\eta$-2 achieves lower empirical queue lengths than baseline algorithms.
\end{itemize}

\section{Preliminaries}
\label{sec:prelim}

This section reviews the contextual queueing bandit model, the definition of queue length regret, and the policy-switching queue/coupling framework, following \citet{bae2026queue,bae2026learning}.

\subsection{Model}
We consider a discrete-time contextual queueing system with a single queue and $K$ servers, where in each round the agent selects a job-server pair to process and a new job may arrive. At the beginning of round $t$, the queue state $\cal X_t$ is the collection of contexts of all remaining jobs. The queue length is $Q(t)=\abs{\cal X_t}$. If $\cal X_t\neq\emptyset$, the agent chooses a job-server pair $(x_t,a_t)\in\cal X_t\times[K]$. If $\cal X_t=\emptyset$, the agent selects a fixed dummy pair $(x_0,a_0)$, and the resulting observation is not used for learning.
Let $A(t)\in\cbr{0,1}$ be the arrival random variable in round $t$, with $\P\br{A(t)=1}=\lambda$, and let $x^{(t)}$ be the newly arriving job context when $A(t)=1$. Let $D(t)\in\cbr{0,1}$ be the departure random variable in round $t$. We will specify its conditional distribution below. Then the queue state and queue length evolve according to
\begin{align*}
    \cal X_{t+1}=\cal X_t\setminus\cbr{x_t:D(t)=1}\cup\{x^{(t)}:A(t)=1\},
    \quad
    Q(t+1)=\sqbr{Q(t)+A(t)-D(t)}^+.
\end{align*}
Here, $\sqbr{z}^+=\max\cbr{0,z}$. We use $\v A(t)=(A(t),\wtilde x^{(t)})$, where $\wtilde x^{(t)}=x^{(t)}$ if $A(t)=1$ and $\wtilde x^{(t)}=\wtilde x$ otherwise for a fixed no-arrival symbol $\wtilde x$. We also write $\v D(t)=(D(t),(x_t,a_t))$. Let $E(t)\in\cbr{0,1}$ be the exploration random variable used in round $t+1$. The filtration is $\cal F_t=\sigma\br{\cal X_1,\v A(1),\v D(1),E(1),\ldots,\v A(t-1),\v D(t-1)}$. We assume that arrivals are exogenous with conditional arrival probability $\P\br{A(t)=1\mid\cal F_t}=\lambda$. For a filtration $\cal G$ and a random variable $Z$, we write $\E\sqbr{\cdot\mid\cal G,Z}$ for conditioning on $\cal G\lor\sigma(Z)$. For each job-server pair $(x,a)$, let $\v\phi(x,a)\in\R^d$ be its feature vector, and let $\v\theta^*\in\R^d$ be the unknown parameter. Then, the departure probability follows
\begin{align*}
    D(t)\mid \cal F_t,x_t,a_t \sim \Bern\br{\mu(\langle\v\phi(x_t,a_t),\v\theta^*\rangle)}, \quad \mu(z)= (1+\exp(-z))^{-1}.
\end{align*}

\subsection{Oracle policy and queue length regret}
Let $\pi$ denote our policy, and let $\pi^*$ be the oracle policy that knows $\v\theta^*$. Given a nonempty queue state $\cal Y$, $\pi^*$ chooses a job-server pair that maximizes $\mu(\langle\v\phi(x,a),\v\theta^*\rangle)$ over $(x,a)\in\cal Y\times[K]$. The queue length regret at horizon $T$ is $R_T=\E\sqbr{Q(T)-Q^*(T)}$, where $Q^*(T)$ is the queue length under $\pi^*$.

\subsection{Policy-switching queues and coupling}

Let $\pi$ be a given policy. For $t\in[0,T-1]$, let $\pi_t$ be the policy that follows $\pi$ for rounds $1,\ldots,t$ and then follows $\pi^*$ for rounds $t+1,\ldots,T-1$. We construct a coupling of the policy-switching queueing processes as follows. All processes use the same arrival random variables $\v A(i)$ and the same exploration random variables $E(i)$; these exploration random variables affect only rounds in which the process follows $\pi$. Let $Q_t(i)$ be the queue length at the beginning of round $i$ in the coupled process governed by $\pi_t$, and let $(x_{t,i},a_{t,i})$ be the job-server pair selected in round $i$ by this process. In each round $i$, draw a shared random variable $U_i\sim\Unif(0,1)$. Let $D_t(i)=\ind{U_i\leq\mu(\langle\v\phi(x_{t,i},a_{t,i}),\v\theta^*\rangle)}$ and write $\v D_t(i)=\br{D_t(i),(x_{t,i},a_{t,i})}$. For notational convenience, we use $\v D_t(i)=0$ and $\v D_t(i)=1$ as shorthand for the events $D_t(i)=0$ and $D_t(i)=1$, respectively, together with the selected job-server pair $(x_{t,i},a_{t,i})$.
This construction gives $\E[Q(T)]=\E[Q_{T-1}(T)]$ and $\E[Q^*(T)]=\E[Q_0(T)]$, while consecutive processes governed by $\pi_t$ and $\pi_{t-1}$ have the same queue state at the beginning of round $t$. Define $\psi_t(T)= Q_t(T)-Q_{t-1}(T)$ for $t\in[T-1]$. Then, we can decompose the queue length regret as
\begin{align*}
    R_T= \E[Q_{T-1}(T)-Q_0(T)]=\sum_{t=1}^{T-1}\E\sqbr{\psi_t(T)}.
\end{align*}
\subsection{Assumptions}
Throughout the paper, we impose the following standard conditions.
\begin{assumption} \label{ass:basic}
For all $x\in\cal X$ and $a\in[K]$, $\norm{\v\phi(x,a)}_2\leq1$. Also, for a known $S>0$, $\v\theta^*\in\Theta=\cbr{\v\theta\in\R^d:\norm{\v\theta}_2\leq S}$.
\end{assumption}

\begin{assumption} \label{ass:constants}
There exist $\kappa,R>0$ such that $1/\kappa\leq\dot\mu(\langle\v\phi(x,a),\v\theta\rangle)\leq R$ for all $x\in\cal X$, $a\in[K]$, and $\v\theta\in\Theta$.
\end{assumption}

\begin{assumption} \label{ass:iid}
Newly arriving contexts are i.i.d. from an unknown distribution $\cal D$. Moreover, for some $\sigma_0^2>0$, $\lambda_{\min}(\E_{x\sim\cal D}\sqbr{\frac{1}{K}\sum_{a\in[K]}\v\phi(x,a)\v\phi(x,a)\tp})\geq \sigma_0^2$.
\end{assumption}

\begin{assumption} \label{ass:slackness}
There exists a traffic slackness parameter $\eps>0$ such that for every $x\in\cal X$, there exists $a^*(x)\in[K]$ satisfying $\mu(\langle\v\phi(x,a^*(x)),\v\theta^*\rangle)-\lambda\geq\eps$.
\end{assumption}

\Cref{ass:basic,ass:constants} are standard in logistic bandits \citep{filippi2010parametric}.
\Cref{ass:iid} guarantees sufficient feature diversity, which is also standard in finite-armed logistic bandits \citep{li2017provably}.
\Cref{ass:slackness} imposes a traffic slackness condition that guarantees negative drift under the oracle policy, and similar conditions can be found in \citet{krishnasamy2016regret,kim2024queueing,bae2026queue}.

\section{Improved queue length regret via random exploration cutoff}
\label{sec:algorithm}

\subsection{Motivation and comparison with previous works}
The closest prior works on contextual queueing bandits, due to \citet{bae2026queue,bae2026learning}, obtain $\wtilde{\cal O}(T^{-1/4})$ queue length regret.
There are two reasons to expect that this rate can be improved.
First, since queue length regret is evaluated at a terminal horizon rather than accumulated over all rounds, the $\sqrt{T}$ cumulative-regret scale in logistic bandits suggests the possibility of a $T^{-1/2}$ rate for this terminal-horizon performance measure.
Second, the algorithms of \citet{bae2026queue,bae2026learning} both have to maintain random exploration until the end of the horizon.
This requirement comes from the proof technique.
To elaborate, by the decomposition result in \Cref{lem:decomp}, previous proofs control the departure-rate gap in the first square-root factor by a monotone upper bound, and maintaining random exploration until the end of the horizon preserves this monotone control.
From an algorithmic perspective, however, after sufficiently accurate estimation is achieved through early random exploration, it is natural to terminate random exploration and use only the UCB rule.
Motivated by these two observations, we propose a three-phase algorithm that cuts off random exploration in the final phase.

\subsection{Proposed algorithm}

\begin{algorithm}[t]
\caption{CQB-$\eta$-2}
\label{alg:cqb-eta-2}
\begin{algorithmic}[1]
    \Input $d$, $T$, $K$, $S$, $\kappa$, $\lambda_0$, $\lambda$, $\delta$, $\eps$, $\sigma_0$
    \Init $\tau_1$, $\tau_2$ as in \Cref{eq:tau}, $\m V_0\gets\lambda_0\I$, $\what{\v\theta}_0\gets \v 0$
    \For{$t=1,\dots,T$}
        \If{$t\leq \tau_1$} \note{\Comment{Phase~1: pure random exploration}}
            \State $\eta\gets 1$
        \ElsIf{$\tau_1<t\leq\tau_2$} \note{\Comment{Phase~2: random exploration with probability $\eta$ and UCB}}
            \State $\eta\gets \eps/4$
        \Else   \note{\Comment{Phase~3: pure UCB}}
            \State $\eta\gets 0$
        \EndIf
        \State Sample $E(t-1)\sim\Bern(\eta)$
        \If{$A(t-1)=1$ \textbf{and} $E(t-1)=1$}
            \State $x_t\leftarrow x^{(t-1)}$, \quad $a_t\sim\Unif([K])$
        \Else
            \State $(x_t,a_t)\leftarrow\argmax_{x\in\cal X_t,\;a\in[K]}\mathrm{UCB}_{t}(x,a)$~~~~($\mathrm{UCB}_{t}(x,a)$ as in \Cref{eq:ucb})
        \EndIf
        \State Choose $(x_t,a_t)$ and observe $D(t)$
        \State $\m V_t\leftarrow\m V_{t-1}+\v\phi(x_t,a_t)\v\phi(x_t,a_t)\tp$
        \State Update $\what{\v\theta}_t$ as in \Cref{eq:mle}, $\beta_t$ as in \Cref{eq:beta}
    \EndFor
\end{algorithmic}
\end{algorithm}

\Cref{alg:cqb-eta-2} presents our CQB-$\eta$-2 algorithm.
We use the term \emph{random exploration} for the rule that, when a new job arrives ($A(t-1)=1$) and the exploration flag is set ($E(t-1)=1$), selects the newly arrived job and chooses a server uniformly at random.
The algorithm has three phases.
In Phase 1, it performs random exploration with probability one on arrivals.
In Phase 2, it combines random exploration with probability $\eta$ and a UCB rule, where we set $\eta=\eps/4$.
In Phase 3, it cuts off random exploration and uses only the UCB rule.

The phase transition points $\tau_1$ and $\tau_2$ are defined in the regret analysis in \Cref{sec:upper_bound}.
At a high level, $\tau_1$ ensures that pure random exploration in Phase 1 provides enough information for the rule that combines random exploration with probability $\eta$ and UCB in Phase 2 to guarantee a negative drift.
The transition point $\tau_2$ ensures that the same Phase 2 rule achieves sufficient estimation accuracy, so that the pure UCB rule is sufficient in Phase 3.

For any job-server pair $(x,a)$, the UCB is calculated as
\begin{align}
    \mathrm{UCB}_{t}(x,a)
    =
    \mu(\langle\v\phi(x,a),\what{\v\theta}_{t-1}\rangle)
    +
    \beta_{t-1}\norm{\v\phi(x,a)}_{\m V_{t-1}^{-1}} .
    \label{eq:ucb}
\end{align}
We use the regularized logistic maximum likelihood estimator
\begin{align}
    \what{\v\theta}_t
    \gets
    \argmin_{\v\theta\in\Theta}
    \sum_{i=1}^{t}
    \sqbr{
        -D(i)\log \mu(\langle\v\phi_i,\v\theta\rangle)
        -(1-D(i))\log\br{1-\mu(\langle\v\phi_i,\v\theta\rangle)}
    }
    +
    \frac{\lambda_0}{2}\norm{\v\theta}_2^2 .
    \label{eq:mle}
\end{align}
and the confidence radius given by
\begin{align}
    \beta_t
    \gets
    \kappa\sqrt{0.5 d \log(1+t/(\kappa \lambda_0 d)) + 0.5\log(1/\delta)} + S\sqrt{\kappa \lambda_0} .
    \label{eq:beta}
\end{align}
We adopt this simple form of the estimator and confidence radius because they are not the main focus of our contribution.

Finally, we provide the main theorem of \Cref{alg:cqb-eta-2}.
\begin{theorem} \label{thm:upper_bound}
    Suppose that \Cref{ass:basic,ass:constants,ass:iid,ass:slackness} hold.
    Set $\delta=T^{-4}$.
    For every $T$ satisfying $\tau_2-\tau_1\geq T/2$, the queue length regret of \Cref{alg:cqb-eta-2} satisfies
    \begin{align*}
        R_T
        =
        \cal O\br{
        \br{
            \frac{d+\log(T)}{\lambda\eps\sigma_0^4}
            +
            \frac{\kappa^2d\log(T)}{\lambda\eps^3\sigma_0^2}
            +
            \frac{\log(T)}{\lambda\eps}
            +
            \frac{\log(T)}{\eps^2}
        }
        \frac{\kappa\sqrt{d\log(T)}}
        {\sqrt{\lambda\eps}\,\sigma_0\sqrt{T}}
        }.
    \end{align*}
\end{theorem}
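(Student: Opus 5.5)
We will bound $R_T=\sum_{t=1}^{T-1}\E[\psi_t(T)]$ by splitting the sum at the exploration cutoff $\tau_2$, as the paper's overview suggests. For each $t$, the processes governed by $\pi_t$ and $\pi_{t-1}$ share their state at the start of round $t$ and differ only in the round-$t$ decision. We use \Cref{lem:decomp} in the form it is used in prior work: $\E[\psi_t(T)]$ is at most (roughly) the square root of the expected departure-rate gap at round $t$, times the square root of the probability that the two coupled oracle-driven processes have not yet merged by time $T$. The oracle has drift $-\eps$ under \Cref{ass:slackness}, so this second factor decays like $\exp(-c\,\eps^2 (T-t))$, up to an initial-queue-length term. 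We therefore need two ingredients: (a) queue-length moment bounds for our policy's process at time $t$, and (b) bounds on the round-$t$ gap $\Delta_t=\max_{(x,a)}\mu(\langle\v\phi(x,a),\v\theta^*\rangle)-\mu(\langle\v\phi(x_t,a_t),\v\theta^*\rangle)$.

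\textbf{Step 1 (Phases 1--2, $t\le\tau_2$).} We choose $\tau_1$ as the number of pure-exploration arrivals needed so that, via \Cref{ass:iid} and a matrix Chernoff bound, $\lambda_{\min}(\m V_{\tau_1})\gtrsim \lambda\sigma_0^2\tau_1$ with probability $1-\delta$. This makes the UCB widths small enough that the UCB choice has departure rate at least $\lambda+\eps/2$; this is where the terms $(d+\log T)/(\lambda\eps\sigma_0^4)$ and $\kappa^2 d\log T/(\lambda\eps^3\sigma_0^2)$ enter. With $\eta=\eps/4$, a random choice occurs with probability at most $\eps/4$, so the policy keeps a drift of about $-\eps/4$ throughout Phase 2. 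A Lyapunov/Hajek argument then gives $\E[Q(t)]=\cal O(\log T/\eps)$. For $t\le\tau_2\le T/2$, the merging factor is $e^{-c\eps^2(T-\tau_2)}$, which is polynomially small because $T-\tau_2\gtrsim T/2$ once $T$ exceeds an appropriate constant. Hence $\sum_{t\le\tau_2}\E[\psi_t(T)]$ is lower order; the $\log T$ and $1/\eps^2$ terms absorb the boundary contributions.

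\textbf{Step 2 (Phase 3, $t>\tau_2$).} Since $\tau_2-\tau_1\ge T/2$ and arrivals in Phase 2 are explored with probability $\eta$, there are $\Omega(\lambda\eps T)$ random-exploration samples. Matrix concentration then gives $\lambda_{\min}(\m V_{t-1})\gtrsim\lambda\eps\sigma_0^2T$ with high probability. On the confidence event, the UCB choice satisfies $\Delta_t\le 2R\beta_{t-1}\max\norm{\v\phi}_{\m V_{t-1}^{-1}}\lesssim \kappa\sqrt{d\log T}/(\sqrt{\lambda\eps}\sigma_0\sqrt T)$, using $\beta\lesssim\kappa\sqrt{d\log T}$. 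This is uniform in $t$ because $\m V$ only grows, which removes the need for a monotone schedule. Failure events have probability $\cal O(T^{-3})$ and contribute negligibly. In Phase 3, no exploration and small gaps keep the drift at $-\eps/2$, so the queue bound from Step 1 persists.

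\textbf{Step 3 (combining).} Plug $\Delta_t$ into \Cref{lem:decomp} and sum over $t>\tau_2$. If the lemma gives $\E[\psi_t(T)]\lesssim \E[\Delta_t]\cdot(\text{queue-length factor})\cdot e^{-c\eps^2(T-t)}$ in linear form, the geometric sum over $T-t$ gives a factor $1/\eps^2$. Together with the queue-moment factors from Steps 1--2, this produces the bracketed prefactor times $\kappa\sqrt{d\log T}/(\sqrt{\lambda\eps}\sigma_0\sqrt T)$.

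\textbf{Main obstacle.} The hard part is getting a bound linear in $\Delta_t$ rather than $\sqrt{\Delta_t}$; the square-root version is exactly what produced $T^{-1/4}$ in earlier work. I would try to bound $\E[\psi_t(T)]$ directly. With probability about $\Delta_t$, the coupled departures differ at round $t$, creating a queue difference of $1$. That difference survives only until the lower queue empties, which under the oracle's negative drift happens within $\cal O(Q/\eps)$ expected time, giving a contribution $\P(\text{not merged})$ per surviving step. Summed over $t$, this yields total mass $\cal O(\E[Q]/\eps)$ multiplying $\sup_{t>\tau_2}\Delta_t$, which is the claimed form.
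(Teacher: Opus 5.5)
Your architecture matches the paper's: you split at $\tau_2$, you get a Phase-3 gap of order $T^{-1/2}$ from the $\Theta(T)$ Phase-2 exploration samples, and you use negative drift to suppress earlier discrepancies. Step~2 is exactly \Cref{lem:late_departure_gap}. The gap is in Step~1, which rests on a false picture of the phase lengths. The hypothesis $\tau_2-\tau_1\ge T/2$, which you invoke yourself in Step~2, forces $\tau_2>T/2$. By \Cref{eq:tau}, Phase~3 lasts only $T-\tau_2-1=4\tau_1/\eps+128\log(T)/\eps^2$ rounds, so you cannot make the merging factor small through $T-\tau_2\gtrsim T/2$. What you need is that a discrepancy created at $t\le\tau_2$ dies within $T-t-1\ge4\tau_1/\eps+128\log(T)/\eps^2$ rounds. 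For that you must control $Q(t)$ including the buildup during Phase~1, where the drift may be positive: a uniformly random server need not beat $\lambda$, and the UCB rule is not yet accurate. Your claim $\E[Q(t)]=\cal O(\log T/\eps)$ ignores this, and right after $\tau_1$ the queue can be of order $\tau_1$. The paper handles this with the weighted Lyapunov function $V(t)=(b/\rho)^{-\min\{t-1,\tau_1\}}e^{\zeta Q_{T-1}(t)}$, $\zeta=\eps/2$. It yields the uniform tail $\P(Q_{T-1}(t)\ge(1+\eps/4)\tau_1+y,\ \cal E_{\mathrm{drift}})\le17\eps^{-2}e^{-\eps y/2}$, which is combined with the stopped survival bound $\exp(\zeta(q-1)-\eps^2n/4)$ for the oracle-driven process after round $t$. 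The offset $4\tau_1/\eps$ in \Cref{lem:effect_gap} exactly pays for the term $\zeta(1+\eps/4)\tau_1$ in that exponent. The remaining $128\log(T)/\eps^2$ rounds then give $T^{-4}$, hence $A_1=\cal O(T^{-1})$.

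Your ``main obstacle'' comes from misreading \Cref{lem:decomp}. Write $g_t=\mu(\langle\v\phi_t^*,\v\theta^*\rangle)-\mu(\langle\v\phi_t,\v\theta^*\rangle)$ (your $\Delta_t$). The lemma's first factor is $\sqrt{\E[g_t^2]}$, the root of the expected \emph{squared} gap, which is linear in the gap's size. Phase~3 has no random exploration, so on $\cal E_{\mathrm{pred}}\cap\cal E_{\mathrm{ph2}}$ you have $g_t\le2\beta_T\norm{\v\phi_t}_{\m V_{t-1}^{-1}}\le\sqrt{128}\,\beta_T/(\sqrt{\lambda\eps}\,\sigma_0\sqrt T)$ deterministically. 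Hence $\sqrt{\E[g_t^2]}$ is of that order up to $\cal O(T^{-2})$. The earlier $T^{-1/4}$ rate is tied to keeping random exploration alive until round $T$, since exploration rounds contribute an $\cal O(1)$ squared gap to $\E[g_t^2]$; it does not come from the square root. Your linear inequality $\E[\psi_t(T)]\le\E[g_t\,\wtilde\psi_t(T)]$ does hold under the shared-uniform coupling. A survival-sum argument could even shrink the prefactor if you proved the queue has drained by $\tau_2$. None of this is needed, though: because Phase~3 is short, the paper simply bounds $\sqrt{\E[\wtilde\psi_t(T)]}\le1$ there and obtains $A_2\le(T-\tau_2-1)\sqrt{128}\,\beta_T/(\sqrt{\lambda\eps}\,\sigma_0\sqrt T)+\cal O(T^{-1})$. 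This also corrects your account of the prefactor. The bracket in the theorem is just $4\tau_1/\eps+128\log(T)/\eps^2$, expanded via \Cref{eq:tau} and $\beta_T^2\lesssim\kappa^2d\log T$; it is not a product of queue moments and a geometric sum.
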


\subsection[Proof of the upper bound]{Proof of \cref{thm:upper_bound}} \label{sec:upper_bound}

The improved queue length regret comes from how the policy-switching decomposition is used.
The decomposition itself is from \citet{bae2026queue}, but previous analyses use it with a monotone departure-rate gap bound over the entire horizon.
This requires random exploration until round $T$.
We instead split the regret at the exploration cutoff.
For rounds before the cutoff, we control the effect at horizon $T$ of a queue length difference by negative drift.
For rounds after the cutoff, we control the departure-rate gap by the estimation accuracy obtained from the $\Theta(T)$ random exploration samples collected before the cutoff.
We now recall the decomposition and then make this split explicit.
For the queue state $\cal X_t$ under $\pi$, define
\((x_t^*,a_t^*)\in\argmax_{x\in\cal X_t,a\in[K]}\mu(\langle\v\phi(x,a),\v\theta^*\rangle)\).
Let $\v\phi_t^*=\v\phi(x_t^*,a_t^*)$ and $\v\phi_t=\v\phi(x_t,a_t)$.
Also define $\cal F_t^+=\cal F_t\lor\sigma(E(t-1),\v A(t))$ and
\begin{align*}
    \wtilde\psi_t(T)=\E[\psi_t(T)\mid \cal F_t^+,\v D_t(t)=0,\v D_{t-1}(t)=1].
\end{align*}
The following decomposition corresponds to Lemma 4.2 of \citet{bae2026queue}.
It separates the effect of the decision in round $t$ into the departure-rate gap in that round and the effect at horizon $T$ of the queue-length difference created in that round.
\begin{lemma} \label{lem:decomp}
    For each $t\in[T-1]$, we have
    \begin{align*}
        \E\sqbr{\psi_t(T)}
        \leq
        \sqrt{\E\Big[\br{\mu(\langle\v\phi_t^*,\v\theta^*\rangle)-\mu(\langle\v\phi_t,\v\theta^*\rangle)}^2\Big]}\cdot
        \sqrt{\E\Big[\wtilde\psi_t(T)\Big]}.
    \end{align*}
\end{lemma}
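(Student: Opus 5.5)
The plan is to condition on everything up to the start of round $t$ (and on the round-$t$ exploration and arrival variables), split on the realized departures in round $t$ of the two coupled processes, show that only one configuration can make $\psi_t(T)$ positive, and then apply Cauchy--Schwarz.

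Processes $\pi_t$ and $\pi_{t-1}$ share the queue state $\cal X_t$ at the beginning of round $t$. They also share the arrivals $\v A(i)$, the exploration variables, and the uniforms $U_i$ for all rounds. In round $t$, process $\pi_t$ follows $\pi$ and selects $(x_t,a_t)$, while $\pi_{t-1}$ follows $\pi^*$ and selects $(x_t^*,a_t^*)$. After round $t$, both follow $\pi^*$ with common randomness.

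\textbf{Step 1: only one departure configuration matters.} Since $D_t(t)=\ind{U_t\le \mu(\langle\v\phi_t,\v\theta^*\rangle)}$ and $D_{t-1}(t)=\ind{U_t\le \mu(\langle\v\phi_t^*,\v\theta^*\rangle)}$, and $\mu(\langle\v\phi_t^*,\v\theta^*\rangle)\ge\mu(\langle\v\phi_t,\v\theta^*\rangle)$, the configuration $D_t(t)=1$, $D_{t-1}(t)=0$ is impossible. This leaves three cases.
\begin{itemize}
\item If both processes have $D(t)=0$, the queue states at round $t+1$ coincide, so $\psi_t(T)=0$.
\item If both have $D(t)=1$, the two processes remove possibly different jobs, so the queue lengths at round $t+1$ coincide but the contexts may differ. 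I expect that in the paper's framework (following Lemma 4.2 of \citet{bae2026queue}) the continuation under $\pi^*$ depends on the context sets only through the queue length in the relevant coupling. Failing that, one defines the coupling so that $\E[\psi_t(T)\mid\cdot]\le 0$ in this case. I will take this from \citet{bae2026queue}'s construction.
\item The remaining case is $D_t(t)=0$, $D_{t-1}(t)=1$.
\end{itemize}
Using $\cal F_t^+$-measurability of the choices and independence of $U_t$, this gives
\begin{align*}
\E[\psi_t(T)\mid\cal F_t^+]\le \P\br{\v D_t(t)=0,\v D_{t-1}(t)=1\mid\cal F_t^+}\,\wtilde\psi_t(T)
=\Delta_t\,\wtilde\psi_t(T),
\end{align*}
where $\Delta_t=\mu(\langle\v\phi_t^*,\v\theta^*\rangle)-\mu(\langle\v\phi_t,\v\theta^*\rangle)\ge0$.

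\textbf{Step 2: Cauchy--Schwarz.} I take expectations and apply Cauchy--Schwarz to $\E[\Delta_t\wtilde\psi_t(T)]$, which gives $\sqrt{\E[\Delta_t^2]}\sqrt{\E[\wtilde\psi_t(T)^2]}$. To reach the stated form with $\sqrt{\E[\wtilde\psi_t(T)]}$ rather than its second moment, I instead write $\Delta_t\wtilde\psi_t=\Delta_t\sqrt{\wtilde\psi_t}\cdot\sqrt{\wtilde\psi_t}$. Then I need $\Delta_t^2\wtilde\psi_t(T)\le$ something of order $\Delta_t^2$, i.e.\ a bound of the form $\wtilde\psi_t\in[0,1]$-like control.

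Here is the key point. In the bad case the $\pi_t$ queue exceeds the $\pi_{t-1}$ queue by exactly one job at round $t+1$, with identical arrivals thereafter and common uniforms, both following $\pi^*$. By a monotone coupling, the difference $Q_t(i)-Q_{t-1}(i)$ stays in $\{0,1\}$ for all $i>t$: the one-job gap can only vanish, namely when the shorter queue empties and idles while the longer one serves. Hence $\psi_t(T)\in\{0,1\}$ on this event, $\wtilde\psi_t(T)\in[0,1]$, and so $\wtilde\psi_t=\sqrt{\wtilde\psi_t}\cdot\sqrt{\wtilde\psi_t}$ with $\wtilde\psi_t\le\sqrt{\wtilde\psi_t}$. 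Combining,
\begin{align*}
\E[\psi_t(T)]\le\E[\Delta_t\wtilde\psi_t]\le\E\big[\Delta_t\sqrt{\wtilde\psi_t}\big]\le\sqrt{\E[\Delta_t^2]}\sqrt{\E[\wtilde\psi_t]},
\end{align*}
which is the claimed bound.

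\textbf{Main obstacle.} The hard part is verifying $Q_t(i)-Q_{t-1}(i)\in\{0,1\}$ under $\pi^*$ when the two queues contain different job contexts. In that situation $\pi^*$ picks different jobs with different service probabilities in the two processes, so the shared-uniform coupling does not preserve the gap automatically. I would handle this exactly as \citet{bae2026queue} do, by adopting their coupling (which matches departures through $U_i$ and the ordering of the maximal service probabilities) and citing their Lemma 4.2. The same device also covers the both-depart case, where contexts differ but lengths agree.
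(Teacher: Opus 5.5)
Your proposal follows essentially the paper's route (the paper takes this lemma from Lemma~4.2 of \citet{bae2026queue}): rule out $D_t(t)=1,D_{t-1}(t)=0$ via the shared uniform $U_t$, bound $\E[\psi_t(T)\mid\cal F_t^+]$ by $\Delta_t\wtilde\psi_t(T)$, and apply Cauchy--Schwarz using $\wtilde\psi_t(T)\in[0,1]$; the two coupling facts you defer ($\psi_t(T)\in\{-1,0\}$ when both depart, $\psi_t(T)\in\{0,1\}$ on the disagreement event) are exactly \Cref{lem:app_refined_coupling}, and they hold under the paper's shared-uniform coupling as is, so no modified coupling is needed. Two small fixes: the both-depart case works because the $\pi_t$ queue keeps the better job $x_t^*$ in place of $x_t$, not because the continuation depends only on queue length; and since $a_t$ is drawn uniformly in exploration rounds, you should also condition on the selected pair (as the paper's shorthand for $\v D_t(t)$ does), so that the disagreement probability is exactly $\Delta_t$ rather than $\E[\Delta_t\mid\cal F_t^+]$.
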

In \Cref{lem:decomp}, the first square-root factor is the departure-rate gap, and the second square-root factor is the effect at horizon $T$ of the queue length difference created in round $t$.
The key point is that these two factors do not have to be controlled by the same type of bound in all rounds.
Before the cutoff, we use a naive upper bound on the departure-rate gap, but the second factor is small by negative drift.
After the cutoff, we use a naive upper bound on the second factor, but the departure-rate gap is small because the first two phases have already collected $\Theta(T)$ random exploration samples.
This is the step that leads to the $T^{-1/2}$ rate.

We also introduce a lemma showing that, under our coupling construction, the two consecutive policy-switching processes governed by $\pi_t$ and $\pi_{t-1}$ differ in their policy only at round $t$, and hence their queue states can differ by at most one in all subsequent rounds.
\begin{lemma} \label{lem:psi}
    For each $t\in[T-1]$, we have $\psi_t(T)\in\{-1,0,1\}$.
\end{lemma}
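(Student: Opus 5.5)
We prove \Cref{lem:psi} by induction over rounds $i\geq t$, with the claim that the queue lengths of the coupled processes governed by $\pi_t$ and $\pi_{t-1}$ satisfy $\abs{Q_t(i)-Q_{t-1}(i)}\leq 1$ for all $i\in[t,T]$. Taking $i=T$ gives $\psi_t(T)\in\{-1,0,1\}$. For $i\leq t$ the two processes are identical. Both follow $\pi$ in rounds $1,\dots,t-1$ and use the same arrivals $\v A(\cdot)$, exploration variables $E(\cdot)$ and uniforms $U_\cdot$. So they produce the same decisions and departures, and they share the queue state at the beginning of round $t$, as already noted in the coupling construction. In round $t$ they may choose different pairs, since one follows $\pi$ and the other $\pi^*$. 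Each departure is $0$ or $1$ and the arrival is shared, so $\abs{Q_t(t+1)-Q_{t-1}(t+1)}\leq 1$. The positive part $[\cdot]^+$ is $1$-Lipschitz, so it cannot enlarge the gap.

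For rounds $i\geq t+1$, both processes follow $\pi^*$. Assume $\abs{Q_t(i)-Q_{t-1}(i)}\leq 1$. The update is $Q(i+1)=[Q(i)+A(i)-D(i)]^+$ with a common $A(i)$. The worry is that the departures differ in a way that widens the gap to two. For example, suppose the longer queue does not depart while the shorter one does. The shorter queue is nonempty, since it departed, so the gap grows from $1$ to $2$. Excluding this needs more than the bare Lipschitz argument. We need a structural invariant on the queue states, not only on the lengths. The natural invariant is that the two queue states differ in at most one job: either $\cal X_t(i)=\cal X_{t-1}(i)\cup\{y\}$, or the reverse, or the two states have equal size and differ by swapping one job $y$ for another job $y'$. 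This holds at round $t+1$. At round $t$ the processes start from the same state and receive the same arrival. If one serves $x$ and the other serves $x'$, the outcomes fall into these cases according to which departures occur.

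The main step is to show that this invariant is preserved under $\pi^*$ with shared uniforms. This is also where an obstacle appears. In the swap case $\pi^*$ may select different jobs in the two processes, and these jobs have different departure probabilities $\mu_1$ and $\mu_2$. The shared $U_i$ gives monotone coupling: $D=\ind{U_i\leq \mu}$, so the pair with the larger probability departs whenever the other does. Check the one-extra-job case first. If $\pi^*$ picks the same pair in both processes, the outcomes coincide. If the larger queue picks the extra job $y$ and departs, the states become equal. If it does not depart, the other queue may still depart; then the difference is one job, still within the invariant. Now check the swap case. If both depart, the states become a common set plus at most one swapped element. If exactly one departs, we land in the one-extra-job case. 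If neither departs, nothing changes.

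In every case the length difference stays at most one. The delicate point is to verify that the set difference never exceeds one element in each direction. I would do this by tracking symmetric differences. Removing one job from each side, and adding the same arrival to both, keeps $\abs{\cal X_t(i)\setminus\cal X_{t-1}(i)}\leq1$ and $\abs{\cal X_{t-1}(i)\setminus\cal X_t(i)}\leq1$. This bound on set differences by itself implies that the lengths differ by at most one, which completes the induction.
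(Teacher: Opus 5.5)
Your strategy of carrying an invariant on the queue \emph{states} rather than only on the lengths is the right one. It is also the idea the paper relies on: the paper defers this lemma to coupling lemmas of prior work, summarized as the two queue states differing by at most one job after round $t$. The gap is in the preservation step. You correctly identify it as the main step, but you assert it rather than prove it, and one case is handled incorrectly.

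In the nested case $\cal C\cup\{y\}$ versus $\cal C$, suppose the larger queue serves $y$ and does not depart, while the smaller one departs by serving some $c\in\cal C$. You say ``the difference is one job''. In fact the states become $\cal C\cup\{y\}$ and $\cal C\setminus\{c\}$, a length gap of two. This is exactly the failure you flagged in your second paragraph.

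Similarly, in the swap case $\cal C\cup\{y\}$ versus $\cal C\cup\{y'\}$, suppose exactly one process departs and it served a common job $c\in\cal C$. The result is $(\cal C\setminus\{c\})\cup\{y\}$ versus $\cal C\cup\{y'\}$, which is not in your invariant. So ``we land in the one-extra-job case'' is unjustified. Your closing claim is also false: removing one job from each side does not keep both one-sided set differences at most one, since removing distinct common jobs $c_1\neq c_2$ creates two-element differences.

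What is missing is a comparison of the two processes' maximal departure probabilities in each configuration, combined with the monotone coupling you mention but never apply. Let $\bar\mu(\cal Y)=\max_{(x,a)\in\cal Y\times[K]}\mu(\langle\v\phi(x,a),\v\theta^*\rangle)$.

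In the nested case, $\bar\mu(\cal C\cup\{y\})\geq\bar\mu(\cal C)$. With $D=\ind{U_i\leq\mu}$, the smaller queue departing therefore forces the larger one to depart, which rules out the bad case.

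In the swap case, a process that serves a common job has departure probability exactly $\bar\mu(\cal C)$, which is at most the other process's maximum. Hence if exactly one process departs, it served its own distinguished job, and you do land in the nested case.

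Finally, you must handle ties. One option is to fix a tie-breaking rule for $\pi^*$ that is consistent across states, so that both processes serve the same job whenever both serve from $\cal C$. A cleaner option is to run the invariant on the multisets of job values $v(x)=\max_{a}\mu(\langle\v\phi(x,a),\v\theta^*\rangle)$, on which the oracle's transition does not depend on tie-breaking at all. With these additions your induction goes through.
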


Next, define the phase transition points $\tau_1,\tau_2$ as follows: For sufficiently large constants $c_1,c_2>0$,
\begin{align}
    \tau_1
    =
    \frac{2}{\lambda}
    \sqbr{
    \bigg(\frac{c_1\sqrt{d} + c_2\sqrt{\log(1/\delta)}}{\sigma_0^2}\bigg)^2
    +
    \frac{128\beta_T^2}{\eps^2\sigma_0^2}
    }
    +
    \frac{8\log(1/\delta)}{\lambda},\quad
    \tau_2
    =
    T - \frac{4\tau_1}{\eps} - \frac{128\log(T)}{\eps^2} -1. \label{eq:tau}
\end{align} 
For the upper-bound proof, we set $\delta=T^{-4}$ in $\beta_t$ and $\tau_1$.
We now bound the two terms in \Cref{lem:decomp}.
We first show that the first term, the departure-rate gap term, is upper bounded at order $T^{-1/2}$ after $\tau_2$, that is, after Phase 2 has collected enough samples of order $\Theta(T)$.
\begin{lemma} \label{lem:late_departure_gap}
    If $\tau_2-\tau_1\geq T/2$, then for every $t\in[\tau_2+1,T-1]$, we have
    \begin{align*}
        \sqrt{\E[\br{\mu(\langle\v\phi_t^*,\v\theta^*\rangle)-\mu(\langle\v\phi_t,\v\theta^*\rangle)}^2]}
        \leq
        \sqrt{
            T^{-4}
            +
            \nu_{\mathrm{ph2}}
            +
        \frac{128\beta_T^2}{\lambda\eps\sigma_0^2T}
        }.
    \end{align*}
    where $\nu_{\mathrm{ph2}}
    =
    \exp\br{-\frac{\lambda\eps(\tau_2-\tau_1)}{32}}
    +
    d\exp\br{-\frac{\lambda\eps(\tau_2-\tau_1)\sigma_0^2}{64}}$.
\end{lemma}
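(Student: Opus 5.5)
We bound the squared departure-rate gap on three events: the confidence event, a good Phase 2 exploration event, and their complement. Since the gap lies in $[0,1]$, the complement costs at most its probability. Let $\cal E_{\mathrm{conf}}$ be the event that $\abs{\mu(\langle\v\phi,\what{\v\theta}_{s}\rangle)-\mu(\langle\v\phi,\v\theta^*\rangle)}\leq\beta_{s}\norm{\v\phi}_{\m V_{s}^{-1}}$ for all $s\le T$ and all feature vectors $\v\phi$. With $\delta=T^{-4}$ in \Cref{eq:beta}, the standard logistic-bandit confidence bound gives $\P(\cal E_{\mathrm{conf}}^c)\le T^{-4}$; this accounts for the $T^{-4}$ term. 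Let $\cal E_{\mathrm{ph2}}$ be the event that, for the Gram matrix $\m V^{\mathrm{exp}}$ of random-exploration rounds in $(\tau_1,\tau_2]$, we have $\lambda_{\min}(\m V^{\mathrm{exp}})\ge \lambda\eps(\tau_2-\tau_1)\sigma_0^2/8$. We will show $\P(\cal E_{\mathrm{ph2}}^c)\le \nu_{\mathrm{ph2}}$. Then
\begin{align*}
\E[\Delta_t^2]\le \P(\cal E_{\mathrm{conf}}^c)+\P(\cal E_{\mathrm{ph2}}^c)+\E[\Delta_t^2\,\ind{\cal E_{\mathrm{conf}}\cap\cal E_{\mathrm{ph2}}}],
\end{align*}
where $\Delta_t$ denotes the gap, and it remains to take square roots.

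To bound $\P(\cal E_{\mathrm{ph2}}^c)$, note that exploration in Phase 2 occurs in round $t$ exactly when $A(t-1)=1$ and $E(t-1)=1$. These indicators are i.i.d.\ Bernoulli with mean $\lambda\eps/4$ and independent of the past. A multiplicative Chernoff bound then shows that the number $N$ of exploration rounds is at least $\lambda\eps(\tau_2-\tau_1)/8$, except with probability $\exp(-\lambda\eps(\tau_2-\tau_1)/32)$. On each exploration round the feature is $\v\phi(x,a)$ with $x\sim\cal D$ a fresh arrival and $a$ uniform. The context is fresh because the algorithm picks the newly arrived job, so it is independent of the history. Hence each exploration round contributes a matrix with conditional mean of minimum eigenvalue at least $\sigma_0^2$ by \Cref{ass:iid}, and norm at most $1$. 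Matrix Chernoff (the lower tail, applied to a martingale sum via the Freedman/Tropp version, since $N$ is random) gives $\lambda_{\min}\ge N\sigma_0^2/2$ except with probability $d\exp(-N\sigma_0^2/8)$. Substituting the lower bound on $N$ yields the second term of $\nu_{\mathrm{ph2}}$, and on the good event $\lambda_{\min}(\m V_{t-1})\ge\lambda\eps(\tau_2-\tau_1)\sigma_0^2/16$.

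On $\cal E_{\mathrm{conf}}\cap\cal E_{\mathrm{ph2}}$, for $t>\tau_2$ the choice is pure UCB, so the standard optimism argument applies. Since $(x_t^*,a_t^*)\in\cal X_t\times[K]$,
\begin{align*}
\mu(\langle\v\phi_t^*,\v\theta^*\rangle)\le\mathrm{UCB}_t(x_t^*,a_t^*)\le\mathrm{UCB}_t(x_t,a_t)\le \mu(\langle\v\phi_t,\v\theta^*\rangle)+2\beta_{t-1}\norm{\v\phi_t}_{\m V_{t-1}^{-1}}.
\end{align*}
Therefore $\Delta_t^2\le4\beta_T^2/\lambda_{\min}(\m V_{t-1})$. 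Using $\m V_{t-1}\succeq\m V^{\mathrm{exp}}$ (regularization and other rounds only add PSD terms) together with $\tau_2-\tau_1\ge T/2$, this is at most $4\beta_T^2\cdot16\cdot2/(\lambda\eps\sigma_0^2T)=128\beta_T^2/(\lambda\eps\sigma_0^2T)$, matching the stated constant.

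\emph{Main obstacle.} The hardest step is the matrix concentration for exploration samples that arrive at random, adapted times. I would handle it by conditioning on the exploration indicators, which are independent of the contexts, so that the selected exploration features form an i.i.d.\ sequence of length $N$. A standard matrix Chernoff bound then applies conditionally, followed by a union with the scalar Chernoff bound on $N$. The remaining adjustments to constants are routine.
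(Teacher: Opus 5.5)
Your proposal is correct and follows the paper's proof essentially step for step. The structure is the same: the confidence event contributes $T^{-4}$. The Phase 2 design event contributes $\nu_{\mathrm{ph2}}$, via a scalar Chernoff bound with mean $\lambda\eps/4$ and a matrix Chernoff bound on the conditionally i.i.d.\ exploration features. On the good events, UCB optimism gives $2\beta_T\norm{\v\phi_t}_{\m V_{t-1}^{-1}}$, which you combine with $\tau_2-\tau_1\geq T/2$.

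The one slip is that $\cal E_{\mathrm{ph2}}$ should be defined with threshold $\lambda\eps(\tau_2-\tau_1)\sigma_0^2/16$, not $/8$. The $/16$ threshold is what your Chernoff argument actually delivers, and it is the value you correctly use to get the constant $128$.
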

Next, we show that the second term in the decomposition in \Cref{lem:decomp}, the effect of a queue length difference created in round $t$, is upper bounded by an exponentially increasing sequence clipped at one by \Cref{lem:psi}.
\begin{lemma} \label{lem:effect_gap}
    For $t> T - 4\tau_1/\eps - 1$, we have $\sqrt{\E\sqbr{\wtilde\psi_t(T)}} \leq 1$. For $t\leq T - 4\tau_1/\eps - 1$, we have
    \begin{align*}
        \sqrt{\E\sqbr{\wtilde\psi_t(T)}} \leq \min \cbr{1,~\sqrt{3T^{-4} + 19 \eps^{-2} \exp\br{-\frac{\eps^2}{32} \br{T-t-1 - 4\tau_1/\eps}}}}.
    \end{align*}
\end{lemma}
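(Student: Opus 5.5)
The first claim, for $t> T-4\tau_1/\eps-1$, is immediate. By \Cref{lem:psi}, $\psi_t(T)\le 1$ pointwise, so $\wtilde\psi_t(T)\le 1$ and $\sqrt{\E[\wtilde\psi_t(T)]}\le 1$. The same argument supplies the outer $\min\{1,\cdot\}$ in the second claim. The substance is therefore the exponential bound for $t\le T-4\tau_1/\eps-1$, and the plan is a coupling-plus-drift argument.

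Condition on $\cal F_t^+$ and on the event $\v D_t(t)=0,\v D_{t-1}(t)=1$. At the start of round $t+1$, the process under $\pi_t$ then has exactly one more job than the process under $\pi_{t-1}$, and from round $t+1$ on both follow the oracle $\pi^*$ with shared $U_i$ and arrivals. The first step is to show that the discrepancy vanishes, so $\psi_t(T)=0$, once the larger process empties. Under shared uniforms, whenever the smaller queue is empty, the larger one has a single job, and the oracle serves a job whose departure probability is at least $\lambda+\eps$ by \Cref{ass:slackness}. Hence $\psi_t(T)\le \ind{Q_t(i)>0\ \forall i\in[t+1,T]}$, and it suffices to bound the probability that the oracle-driven process $Q_t$ started at $Q_t(t+1)$ does not hit zero within $T-t-1$ rounds.

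Next, under $\pi^*$ the queue has drift $\E[Q(i+1)-Q(i)\mid\cal F_i]\le -\eps$ whenever $Q(i)>0$, with increments bounded by $1$. Azuma--Hoeffding then gives
\[
\P\big(\text{no hit of }0\text{ by time }T\big)\le \P\Big(\textstyle\sum_{i=t+1}^{T-1}(\Delta_i+\eps)\ge \eps(T-t-1)-Q_t(t+1)\Big)\le \exp\Big(-\tfrac{(\eps(T-t-1)-Q_t(t+1))^2}{2(T-t-1)}\Big).
\]
The remaining task is to control the initial queue length $Q_t(t+1)$, which is the queue length of the learner's own process at round $t$, plus one. I would show that $Q(t)\lesssim \tau_1+O(\log T/\eps)$ with probability $1-O(T^{-4})$. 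During Phase~1 the queue grows by at most $\tau_1$. After Phase~1, the $\eta$-exploration/UCB rule keeps a negative drift of order $\eps/2$ with high probability, which is the purpose of $\tau_1$. This event fails with probability at most $T^{-4}$ by the choice $\delta=T^{-4}$ and a concentration step, contributing the $3T^{-4}$ term. On the good event, taking $Q_t(t+1)\le 4\tau_1/\eps\cdot(\eps/4)+\ldots$, the Azuma exponent becomes $\eps^2(T-t-1-4\tau_1/\eps)/32$ after adjusting constants.

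\textbf{Main obstacle.} The delicate step is converting the random initial offset into the shift $4\tau_1/\eps$. Rather than using a high-probability bound on $Q(t)$, I would bound $\E[\exp(\gamma Q(t))]$ for $\gamma\approx\eps/4$ by a Lyapunov/Hajek-type drift argument for the learner's queue, and then sum the Azuma tail over possible starting values. Summing a geometric series in the excess over the drift horizon produces the prefactor $19\eps^{-2}$. Finally, combining the bad-event probabilities with this tail bound and taking square roots gives the stated inequality.
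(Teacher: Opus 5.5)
Your plan is essentially the paper's proof: the trivial bound from \Cref{lem:psi}; reduction to a survival probability under the oracle drift $-\eps$, which rests on the refined coupling fact that after the disagreement $Q_t(i)-Q_{t-1}(i)\in\{0,1\}$ for all $i\geq t+1$ (this fact, not the heuristic you state, is what makes $\psi_t(T)=1$ force the $\pi_t$-process to avoid zero up to round $T$); an exponential-moment Lyapunov bound on the learner's queue that absorbs the Phase~1 growth into a $\tau_1$ shift on $\cal E_{\mathrm{drift}}$, whose complement costs $3T^{-4}$; and a split on the initial length $Q(t)+A(t)$. The only deviation is that you bound the survival probability with Azuma--Hoeffding rather than the paper's stopped exponential supermartingale, which is fine (with the paper's threshold on the initial length it still yields the exponent $\eps^2(T-t-1-4\tau_1/\eps)/32$) provided you apply it to the process stopped at the hitting time of zero, since the $-\eps$ drift holds only while the queue is nonempty.
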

The number of rounds with positive drift up to round $t$ affects the threshold value $4\tau_1/\eps$ that separates the two cases in \Cref{lem:effect_gap}.
Notice that this threshold value does not depend on $\tau_2$.
This is because our choice of $\tau_1$ and the exploration rule in Phase 2 allow the algorithm to collect sufficiently many samples of order $\Theta(T)$ through random exploration while maintaining negative drift during Phase 2.
This is the key reason why, unlike previous works, our algorithm can cut off random exploration in the final phase.
Now, we are ready to start the proof of \Cref{thm:upper_bound}.

\begin{proof} [Proof of \Cref{thm:upper_bound}]

Recall the queue length regret decomposition of $R_T=\sum_{t=1}^{T-1} \E[\psi_t(T)]$. We analyze the queue length regret before and after $\tau_2$ by writing $R_T=\term 1+\term 2$, where $\term 1= \sum_{t=1}^{\tau_2}\E\sqbr{\psi_t(T)}$ and $\term 2=\sum_{t=\tau_2+1}^{T-1}\E\sqbr{\psi_t(T)}$.
This split follows the two different uses of \Cref{lem:decomp}.
In $\term 1$, we control the effect of a queue-length difference by negative drift.
In $\term 2$, we control the departure-rate gap by the estimation accuracy obtained before the cutoff.
For $\term 1$, we use the decomposition in \Cref{lem:decomp} and consider the departure-rate gap term and the effect of a queue length difference term for $t\in[1,\tau_2]$.
For the departure-rate gap term, we use a naive upper bound of one as $\sqrt{\E\sqbr{\br{\mu(\langle\v\phi_t^*,\v\theta^*\rangle)-\mu(\langle\v\phi_t,\v\theta^*\rangle)}^2}}
    \leq
    1$.
For the effect of a queue length difference term,
\begin{align*}
    T-t-1-4\eps^{-1}\tau_1
    \geq
    T-\tau_2-1-4\eps^{-1}\tau_1
    =
    128\eps^{-2}\log(T)
\end{align*}
for every $t\leq\tau_2$.
Thus \Cref{lem:effect_gap} gives $\sqrt{\E\sqbr{\wtilde\psi_t(T)}} \leq \sqrt{3+19\eps^{-2}}\,T^{-2}$.
Therefore,
\begin{align}
    \term 1
    \leq
    \sum_{t=1}^{\tau_2}\sqrt{3+19\eps^{-2}}\,T^{-2}
    \leq \sqrt{3+19\eps^{-2}}\,T^{-1}.
    \label{eq:proof_upper_A1}
\end{align}
Next, consider $\term 2$.
We again apply \Cref{lem:decomp} for $t\in[\tau_2+1,T-1]$.
For the departure-rate gap term, we use \Cref{lem:late_departure_gap} to upper bound it by
\begin{align*}
    \sqrt{\E[\br{\mu(\langle\v\phi_t^*,\v\theta^*\rangle)-\mu(\langle\v\phi_t,\v\theta^*\rangle)}^2]}
    \leq
    \sqrt{
        T^{-4}
        +
        \nu_{\mathrm{ph2}}
        +
        \frac{128\beta_T^2}{\lambda\eps\sigma_0^2T}
    }.
\end{align*}
For the effect of a queue length difference term, we use the naive bound from \Cref{lem:effect_gap} to upper bound it by one as $\sqrt{\E\sqbr{\wtilde\psi_t(T)}} \leq 1$.
Therefore,
\begin{align}
    \term 2
    \leq
    \sum_{t=\tau_2+1}^{T-1}
    \br{
        \sqrt{
            T^{-4}
            +
            \nu_{\mathrm{ph2}}
            +
            \frac{128\beta_T^2}{\lambda\eps\sigma_0^2T}
        }
        \cdot 1
    } = \br{\frac{4\tau_1}{\eps}+\frac{128\log(T)}{\eps^2}}
    \sqrt{
        T^{-4}
        +
        \nu_{\mathrm{ph2}}
        +
        \frac{128\beta_T^2}{\lambda\eps\sigma_0^2T}
    }.
    \label{eq:proof_upper_A2_pre}
\end{align}
For $\nu_{\mathrm{ph2}}$, by the assumption of \Cref{thm:upper_bound},
$\tau_2-\tau_1\geq T/2$, so
$\nu_{\mathrm{ph2}}\le \exp(-\lambda\eps T/64)
+d\exp(-\lambda\eps\sigma_0^2T/128)$. Moreover, the definition of
$\tau_2$ and $\tau_2-\tau_1\ge T/2$ imply
$T\ge 2(1+4/\epsilon)\tau_1$. Since $\tau_1\ge 8\log(1/\delta)/\lambda$
and $\delta=T^{-4}$, we have
$\lambda\epsilon T/64\ge\lambda\tau_1/8\ge4\log T$. Also, using
$\sigma_0^2\le1$ and taking $c_1,c_2$ sufficiently large,
$\lambda\epsilon\sigma_0^2T/128\ge\lambda\sigma_0^2\tau_1/16
\ge\log d+4\log T$. Hence $\nu_{\mathrm{ph2}}\le 2T^{-4}$.
Substituting this into (6), using $\sqrt{a+b}\le\sqrt a+\sqrt b$ and
$4\tau_1/\epsilon+128\log(T)/\epsilon^2=T-\tau_2-1\le T$, gives $A_2
\le
\left(
\frac{4\tau_1}{\epsilon}
+
\frac{128\log(T)}{\epsilon^2}
\right)
\frac{\sqrt{128}\beta_T}{\sqrt{\lambda\epsilon}\sigma_0\sqrt T}
+
O(T^{-1})$, which completes the proof.
\end{proof}


\section{Queue length regret lower bound analysis}
\label{sec:lower_bound}

We now prove that the $T^{-1/2}$ dependence in \Cref{thm:upper_bound} is tight in terms of the horizon $T$.
The proof uses a two-instance testing argument.
The two instances have the same arrival process and context distribution, but their optimal server sets are complementary.
They are statistically difficult to distinguish from the observed history before round $T-1$, while the service decision in round $T-1$ can still change the terminal queue length.
We compare an arbitrary policy with a comparison policy that changes only the server chosen in round $T-1$, and then use a testing argument conditioned on the event that a job arrives in round $T-2$ to convert the testing error into queue length regret.

For an instance $\nu$ and a policy $\pi$, let $Q^\pi(T)$ be the queue length at horizon $T$ under $\pi$, and let $Q^*(T)$ be the queue length at horizon $T$ under the oracle policy.
Write $R_T(\pi;\nu)= \E_\nu\sqbr{Q^\pi(T)-Q^*(T)}$.
For fixed $d,K,S,\sigma_0^2,\lambda$, and $\eps$, let $\cal C(d,K,S,\sigma_0^2,\lambda,\eps)$ be the class of contextual queueing bandit instances satisfying \Cref{ass:basic,ass:constants,ass:iid,ass:slackness} with dimension $d$, $K$ servers, parameter radius $S$, feature diversity parameter $\sigma_0^2$, arrival rate $\lambda$, and traffic slackness parameter $\eps$.

\begin{theorem}
\label{thm:lower_bound}
Fix $d\geq2$, $K\geq2$, $\eps\in(0,1)$, and $\lambda\in(0,1-\eps)$.
Let $0<\sigma_0^2\leq 1/d$, and let $B_{\lambda,\eps}=\log\br{(8-3(1-\lambda-\eps))/(3(1-\lambda-\eps))}$.
Assume $S\geq B_{\lambda,\eps}\sqrt d$.
Then, for every $T\geq3$,
    \begin{align*}
        \inf_\pi\sup_{\nu\in\cal C(d,K,S,\sigma_0^2,\lambda,\eps)}
        R_T(\pi;\nu)
        \geq
        \frac{\lambda(1-\lambda-\eps)e^{-1/6}}{16\sqrt T}.
    \end{align*}
\end{theorem}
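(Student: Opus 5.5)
We use a two-instance testing argument in the spirit of Le Cam, with the constants read off from the statement. Fix two servers, say $a\in\{1,2\}$; any others are given features making them strictly dominated. Take a single context $x$ (so $\cal D$ is a point mass) and features $\v\phi(x,1)=\v e_1\cdot\alpha$, $\v\phi(x,2)=\v e_2\cdot\alpha$, scaled so that \Cref{ass:iid} holds with $\sigma_0^2\le 1/d$. The remaining coordinates get features through servers that never win, which keeps $\lambda_{\min}\ge\sigma_0^2$.

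The two instances are $\v\theta^{\pm}$. Under $\v\theta^+$, server $1$ has departure probability $p+\Delta$ and server $2$ has $p-\Delta$; under $\v\theta^-$ the roles are swapped. Set $p-\Delta\ge\lambda+\eps$ so that \Cref{ass:slackness} holds for both instances, and choose $p$ so that $1-p\asymp 1-\lambda-\eps$. This choice explains the logit bound $B_{\lambda,\eps}$ and the requirement $S\ge B_{\lambda,\eps}\sqrt d$, which ensures $\norm{\v\theta^\pm}\le S$; \Cref{ass:constants} then holds with $\kappa,R$ determined by $p$. Take $\Delta=c/\sqrt T$. The per-sample KL divergence between $\Bern(p+\Delta)$ and $\Bern(p-\Delta)$ is $O(\Delta^2/(p(1-p)))$. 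At most $T$ observations are made, and the divergence chain rule sums over them regardless of the policy, giving $\KL(\P_+^{\le T-2}\|\P_-^{\le T-2})\le 1/3$ or so. Bretagnolle--Huber or Pinsker then gives $\P_+(E)+\P_-(E^c)\ge e^{-1/6}/2$ for any event $E$ measurable with respect to the history before round $T-1$. This is where $e^{-1/6}$ comes from.

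Next we reduce regret to the final decision. Given any policy $\pi$, let $\pi'$ agree with $\pi$ except that in round $T-1$ it chooses the optimal server for the true instance. Note that $\pi'$ is a device for analysis only, not a learner. Using the shared-uniform coupling from the preliminaries, the processes under $\pi$ and $\pi'$ coincide up to round $T-1$. In round $T-1$, the choice between servers $1$ and $2$ changes $Q(T)$ by one exactly when $U_{T-1}\in(p-\Delta,p+\Delta]$ and the queue is nonempty. This gives
\[
\E_\nu[Q^\pi(T)-Q^{\pi'}(T)]\ge 2\Delta\,\P_\nu\big(Q(T-1)\ge1,\ \pi \text{ picks the wrong server at } T-1\big).
\]
We condition on $A(T-2)=1$, which has probability $\lambda$ and is independent of the history. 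This guarantees a nonempty queue at round $T-1$, and it is the source of the factor $\lambda$. The factor $(1-\lambda-\eps)$ should come from $2\Delta\asymp \sqrt{p(1-p)}/\sqrt T$ once $p$ is calibrated.

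We must also show $\E_\nu[Q^{\pi'}(T)]\ge \E_\nu[Q^*(T)]$. The oracle is optimal at each single step and queue dynamics are monotone under the coupling, so the processes driven by higher departure probabilities are dominated pathwise. A policy-switching comparison in the style of \Cref{lem:psi} should give this; this is the step I would verify most carefully.

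Combining the pieces, we average over $\nu\in\{+,-\}$ and let $E$ be the event that $\pi$ picks server $1$ at round $T-1$. This yields
\[
\max_\nu R_T \ge \tfrac12\cdot 2\Delta\lambda\,\tfrac{e^{-1/6}}{2},
\]
and substituting $\Delta$ gives $\lambda(1-\lambda-\eps)e^{-1/6}/(16\sqrt T)$.

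The main obstacle is the coupling step: showing that swapping only the final decision converts the testing error into regret with no loss from earlier rounds. This requires comparing $\pi'$ with the oracle through monotonicity of the single-queue dynamics. Tuning constants in $p,\Delta$ so that the KL bound and the stated constant match exactly is routine.
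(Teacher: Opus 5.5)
Your architecture matches the paper's proof. Both use two instances with swapped optimal servers, a comparison policy that changes only the server in round $T-1$, pathwise oracle domination under shared uniforms, conditioning on $F=\{A(T-2)=1\}$, and Bretagnolle--Huber with a chain-rule KL bound.

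The genuine gap is the hard instance. With $\cal D$ a point mass at one context $x$, \Cref{ass:iid} requires $\lambda_{\min}\big(\frac1K\sum_{a\in[K]}\v\phi(x,a)\v\phi(x,a)\tp\big)\ge\sigma_0^2>0$. That matrix has rank at most $K$. For $K<d$ (e.g.\ $K=2$, $d\ge3$, which the theorem covers) it is singular. You cannot add ``servers that never win'' to fill the other coordinates, because $K$ is fixed by the class $\cal C(d,K,S,\sigma_0^2,\lambda,\eps)$. Even for $K\ge d$, the theorem must hold at $\sigma_0^2=1/d$. Since $\norm{\v\phi}_2\le1$ bounds the trace by $1$, this forces $\frac1K\sum_a\v\phi(x,a)\v\phi(x,a)\tp=\I/d$ with unit-norm features. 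So your scaling $\alpha$ must equal $1$, and the filler servers must form a unit-norm tight frame whose logits you would still have to control under both parameters.

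The missing idea is to obtain feature diversity from the context distribution rather than from extra servers. The paper (\Cref{lem:lb_hard_instance}) does this as follows:
\begin{itemize}
\item It takes $X$ uniform on $d^{-1/2}(1,\pm1,\ldots,\pm1)$, so $\E[XX\tp]=\I/d$.
\item It sets $\v\phi(x,1)=x$ and $\v\phi(x,a)=\m Mx$ for $a\ge2$, where $\m M$ flips the sign of coordinate $2$ and so preserves the second moment.
\item It puts $\v\theta_\pm^*$ in the span of $e_1,e_2$, so every context has logits exactly $z_p$ and $z_q$. The better server is determined by the sign of $x_2$ and is swapped between the two instances.
\end{itemize}
This works for all $K\ge2$, $d\ge2$, and $\sigma_0^2\le1/d$. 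It also keeps the two-level $p_T/q_T$ structure that makes the oracle-domination step you worried about a one-line induction: the oracle's departure probability is $p_T$ whenever its queue is nonempty, regardless of queue composition. The $d^{-1/2}$ scaling of the contexts is why $\norm{\v\theta_\pm^*}_2$ is of order $\sqrt d$, which is where $S\ge B_{\lambda,\eps}\sqrt d$ comes from.

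Three smaller points:
\begin{itemize}
\item Bretagnolle--Huber gives $\frac12 e^{-\KL}$, so you need total KL at most $1/6$, not $1/3$.
\item The inequality must be applied to the laws conditioned on $F$ (as in \Cref{prop:lb_conditioned_testing}) to combine with the factor $\lambda$.
\item The factor $1-\lambda-\eps$ does not come from $\sqrt{p(1-p)}$. It comes from fitting $p_T\pm\Delta_T$ inside the window $(\lambda+\eps,1)$ of width $\gamma=1-\lambda-\eps$ for every $T\ge3$, which forces $\Delta_T=\gamma/(8\sqrt T)$. Together with $u(1-u)\ge3\gamma/16$ on $[q_T,p_T]$, this gives per-round KL at most $1/(6T)$.
\end{itemize}
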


\begin{remark}
The condition $S\geq B_{\lambda,\eps}\sqrt d$ is used to cover the full range $\lambda+\eps<1$.
When $\lambda+\eps$ is close to one, the hard instances require service probabilities close to one, which correspond to large logistic logits and therefore require a large parameter radius $S$.
\end{remark}

\subsection[Proof of the lower bound]{Proof of \cref{thm:lower_bound}}

We first introduce an explicit hard instance construction.
\begin{lemma}
\label{lem:lb_hard_instance}
Under the conditions of \Cref{thm:lower_bound}, fix $T\geq3$ and set
\begin{align*}
    \gamma=1-\lambda-\eps,\quad
    \Delta_T=\gamma/(8\sqrt T),\quad
    p_T= 1-\gamma/2+\Delta_T,\quad
    q_T= 1-\gamma/2-\Delta_T.
\end{align*}
Let $z_p=\log(p_T/(1-p_T))$ and $z_q=\log(q_T/(1-q_T))$.
Let the context set be
\begin{align*}
    \cal X_{\mathrm{lb}}
    =
    \{
        x(\sigma)=d^{-1/2}(1,\sigma_2,\ldots,\sigma_d)\tp:
        \sigma_j\in\{-1,+1\},\ j=2,\ldots,d
    \},
\end{align*}
and let the context distribution be uniform on $\cal X_{\mathrm{lb}}$.
Let $\m M=\diag(1,-1,1,1,\ldots,1)$, and define $\v\phi(x,1)= x$ and $\v\phi(x,a)=\m Mx$ for $a=2,\ldots,K$.
Let $e_j$ be the $j$-th 
unit vector, and define
\begin{align*}
    \v\theta_+^*
    =
    0.5\sqrt{d}\br{(z_p+z_q)e_1+(z_p-z_q)e_2},
    \qquad
    \v\theta_-^*
    =
    0.5\sqrt{d}\br{(z_p+z_q)e_1-(z_p-z_q)e_2}.
\end{align*}
Let $\nu_T^+$ and $\nu_T^-$ be the two instances with arrival rate $\lambda$, this context distribution and feature map, and unknown parameters $\v\theta_+^*$ and $\v\theta_-^*$, respectively.
Then $\nu_T^+,\nu_T^-\in\cal C(d,K,S,\sigma_0^2,\lambda,\eps)$.
\end{lemma}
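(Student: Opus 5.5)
The plan is to verify the four assumptions one at a time for both $\nu_T^+$ and $\nu_T^-$. Everything follows from one computation: the logits of every job-server pair. By the symmetry $\v\theta_-^*=\m M\v\theta_+^*$, it suffices to treat $\nu_T^+$; the case of $\nu_T^-$ is identical with the roles of server $1$ and servers $2,\ldots,K$ swapped.

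\textbf{Logit computation.} For $x=x(\sigma)$ with $\sigma_1=1$, we have $\langle x,\v\theta_+^*\rangle=\tfrac12\big((z_p+z_q)+\sigma_2(z_p-z_q)\big)$. This equals $z_p$ if $\sigma_2=+1$ and $z_q$ if $\sigma_2=-1$. Since $\m M$ flips only the second coordinate, $\langle \m Mx,\v\theta_+^*\rangle$ takes the complementary value. Hence every context has service probability $p_T$ on one server class and $q_T$ on the other.

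\textbf{Assumption~\ref{ass:slackness}.} For each $x$, choose $a^*(x)$ to be a server with probability $p_T$. Then
\[
p_T-\lambda=1-\tfrac{\gamma}{2}+\Delta_T-\lambda=\eps+\tfrac{\gamma}{2}+\Delta_T\ge\eps .
\]

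\textbf{Assumption~\ref{ass:basic}.} We have $\norm{x}_2=\norm{\m Mx}_2=1$. Also
\[
\norm{\v\theta_\pm^*}_2=\tfrac{\sqrt d}{2}\sqrt{(z_p+z_q)^2+(z_p-z_q)^2}=\sqrt{d}\sqrt{(z_p^2+z_q^2)/2}\le\sqrt d\max\{|z_p|,|z_q|\}.
\]
So it suffices to show $|z_p|,|z_q|\le B_{\lambda,\eps}$.
\begin{itemize}
\item Since $T\ge3$, $\Delta_T\le\gamma/(8\sqrt3)<\gamma/8$. Hence $\tfrac12<p_T\le1-3\gamma/8$, which gives $0<z_p\le\log\br{(1-3\gamma/8)/(3\gamma/8)}=B_{\lambda,\eps}$.
\item Similarly $3/8\le 1-5\gamma/8\le q_T<1$. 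The upper end gives $z_q\le z_p\le B_{\lambda,\eps}$. The lower end gives $z_q\ge-\log(5/3)$.
\item Because $\gamma<1$, $B_{\lambda,\eps}>\log(5/3)$, so $|z_q|\le B_{\lambda,\eps}$ as well.
\end{itemize}
The hypothesis $S\ge B_{\lambda,\eps}\sqrt d$ then gives $\v\theta_\pm^*\in\Theta$. Getting these constants to line up exactly with $B_{\lambda,\eps}$ is the only delicate point, and I expect it to be the main (if mild) obstacle.

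\textbf{Assumption~\ref{ass:constants}.} For any $\v\theta\in\Theta$ and feature $\v\phi$, $|\langle\v\phi,\v\theta\rangle|\le S$. Hence $\dot\mu\in[\dot\mu(S),1/4]$, and we may take $\kappa=1/\dot\mu(S)$ and $R=1/4$. I read the class $\cal C$ as not fixing $\kappa$, so this is enough.

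\textbf{Assumption~\ref{ass:iid}.} Arrivals are Bernoulli$(\lambda)$ with i.i.d.\ uniform contexts by construction. Under the uniform distribution, the coordinates $\sigma_j$ are independent Rademacher variables and the first coordinate is fixed at $d^{-1/2}$. Therefore $\E[xx\tp]=\I/d$ and $\m M\,\E[xx\tp]\,\m M=\I/d$, so
\[
\E\Big[\tfrac1K\sum_a\v\phi(x,a)\v\phi(x,a)\tp\Big]=\I/d,
\]
whose minimum eigenvalue is $1/d\ge\sigma_0^2$.

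Since $d\ge2$ and $K\ge2$ are as required, both instances lie in $\cal C(d,K,S,\sigma_0^2,\lambda,\eps)$.
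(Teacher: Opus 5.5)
Your proof is correct and follows essentially the same route as the paper's. You use the same logit computation, the same bounds $p_T\le 1-3\gamma/8$ and $q_T\ge 3/8$ to get $|z_p|,|z_q|\le B_{\lambda,\eps}$, the identity $\E[XX\tp]=\I/d$ for feature diversity, and compactness of the logit range $[-S,S]$ for \Cref{ass:constants}. Your explicit check that $B_{\lambda,\eps}>\log(5/3)$, which covers the negative end of $z_q$, fills in a step the paper leaves implicit; one small slip is that $z_q\le z_p$ follows from $q_T<p_T$, not from $q_T<1$.
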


The next proposition 
characterizes the optimal server sets and departure-probability properties of the hard instances.
\begin{proposition}
\label{prop:lb_hard_instance_properties}
For the hard instances in \Cref{lem:lb_hard_instance}, define, for each $x\in\cal X_{\mathrm{lb}}$, $\xi(x)=\sqrt d\,x_2\in\{-1,+1\}$ and
\begin{align*}
    \cal S_+(x)
    =
    \begin{cases}
        \cbr{1}, & \xi(x)=+1,\\
        \cbr{2,\ldots,K}, & \xi(x)=-1,
    \end{cases}
    \qquad
    \cal S_-(x)=[K]\setminus\cal S_+(x).
\end{align*}
For every $x\in\cal X_{\mathrm{lb}}$ and $s\in\{+,-\}$, servers in $\cal S_s(x)$ have departure probability $p_T$ under $\nu_T^s$, and servers outside $\cal S_s(x)$ have departure probability $q_T$ under $\nu_T^s$.
Moreover, $p_T>q_T>\lambda+\eps$, $p_T-q_T=(1-\lambda-\eps)/(4\sqrt T)$, and $u(1-u)\geq3(1-\lambda-\eps)/16$ for all $u\in[q_T,p_T]$.
\end{proposition}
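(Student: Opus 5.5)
The proof is a direct computation of the logits $\langle\v\phi(x,a),\v\theta_s^*\rangle$ for $s\in\{+,-\}$, followed by elementary inequalities on $p_T$ and $q_T$.

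\textbf{Logits.} Fix $x=x(\sigma)\in\cal X_{\mathrm{lb}}$, so $x_1=d^{-1/2}$ and $x_2=d^{-1/2}\xi(x)$. Only the first two coordinates of $\v\theta_\pm^*$ are nonzero. For server $1$, where $\v\phi(x,1)=x$,
\[
\langle x,\v\theta_+^*\rangle=\tfrac12\bigl((z_p+z_q)+\xi(x)(z_p-z_q)\bigr).
\]
This equals $z_p$ if $\xi(x)=+1$ and $z_q$ if $\xi(x)=-1$. For servers $a\ge2$, the map $\m M$ flips the sign of the second coordinate, which replaces $\xi(x)$ by $-\xi(x)$. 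The logit is therefore $z_q$ if $\xi(x)=+1$ and $z_p$ if $\xi(x)=-1$. Applying $\mu$ and using $\mu(z_p)=p_T$, $\mu(z_q)=q_T$, the servers in $\cal S_+(x)$ have departure probability $p_T$ and all others have $q_T$. For $\v\theta_-^*$ the sign of the $e_2$ term flips, which swaps the roles of $\xi=\pm1$. The servers with probability $p_T$ are then exactly those in $[K]\setminus\cal S_+(x)=\cal S_-(x)$. Since $K\ge2$, both $\cal S_+(x)$ and $\cal S_-(x)$ are nonempty, so the description is consistent.

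\textbf{Numerical properties.} Set $\gamma=1-\lambda-\eps\in(0,1)$.
\begin{itemize}
\item By definition, $p_T-q_T=2\Delta_T=\gamma/(4\sqrt T)$, and this is positive.
\item For $q_T>\lambda+\eps=1-\gamma$, we need $\gamma/2-\Delta_T>0$. This holds because $\Delta_T=\gamma/(8\sqrt T)\le\gamma/(8\sqrt3)<\gamma/2$.
\end{itemize}

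\textbf{Lower bound on $u(1-u)$.} For $u\in[q_T,p_T]$, write $u=1-w$ with $w\in[\gamma/2-\Delta_T,\gamma/2+\Delta_T]$. Using $\Delta_T\le\gamma/8$, this gives $w\in[3\gamma/8,5\gamma/8]$. Then
\[
u(1-u)=w(1-w)\ge \tfrac{3\gamma}{8}\bigl(1-\tfrac{5\gamma}{8}\bigr).
\]
The function $w(1-w)$ is concave, so its minimum over the interval is attained at an endpoint. Both endpoints are at most $5/8<1$, so both give values at least $\tfrac{3\gamma}{8}\cdot\tfrac{3}{8}$. In fact $1-5\gamma/8\ge3/8$ for $\gamma<1$, and hence $u(1-u)\ge \tfrac{3\gamma}{8}\cdot\tfrac38=\tfrac{9\gamma}{64}$.

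This is weaker than the target $3\gamma/16=12\gamma/64$, so the bound needs sharpening. The key step is to use $1-w\ge1/2$, which makes $w(1-w)\ge w/2$. This requires $w\le1/2$, which holds when $5\gamma/8\le1/2$, i.e.\ $\gamma\le4/5$. In that case $u(1-u)\ge \tfrac12\cdot\tfrac{3\gamma}{8}=\tfrac{3\gamma}{16}$.

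\textbf{Main obstacle.} The remaining case is $\gamma\in(4/5,1)$. Here I would check the interval directly. Since $\gamma<1$, we have $w\le 5/8$, and the minimum of $w(1-w)$ over $[3\gamma/8,5\gamma/8]$ is attained at an endpoint. At the right endpoint, $w(1-w)\ge\tfrac58\cdot\tfrac38=\tfrac{15}{64}\ge\tfrac{12}{64}>\tfrac{3\gamma}{16}$. At the left endpoint, with $w=3\gamma/8\le3/8$, we get $w(1-w)\ge w\cdot\tfrac58=\tfrac{15\gamma}{64}\ge\tfrac{3\gamma}{16}$. Both endpoints meet the target, so the bound $u(1-u)\ge 3(1-\lambda-\eps)/16$ holds for all $\gamma\in(0,1)$.
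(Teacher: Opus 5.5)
Your proof is correct and follows essentially the same route as the paper: you compute the logits coordinate-wise to identify $\cal S_\pm(x)$, then use concavity of $u(1-u)$ to reduce the bound to the endpoints of the interval. Your case split on $\gamma$ is unnecessary. At the endpoints $w=3\gamma/8$ and $w=5\gamma/8$ you directly get $w(1-w)\geq\frac{3\gamma}{8}\cdot\frac{5}{8}$ and $w(1-w)\geq\frac{5\gamma}{8}\cdot\frac{3}{8}$, both equal to $\frac{15\gamma}{64}\geq\frac{3\gamma}{16}$ for every $\gamma\in(0,1)$; the paper instead pairs $p_T\geq 1/2$ with $1-p_T\geq 3\gamma/8$, and $q_T\geq 3/8$ with $1-q_T\geq\gamma/2$.
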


Next, we state a pathwise comparison lemma for the hard instances.
\begin{lemma}
\label{lem:lb_oracle_domination}
Fix $s\in\{+,-\}$ and consider the hard instance $\nu_T^s$ in \Cref{lem:lb_hard_instance}.
For any deterministic policy, there exists a coupling under which the oracle queue is pathwise no larger than the queue generated by that policy at every round.
\end{lemma}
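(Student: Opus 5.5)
The plan is to build a coupling in which both systems see the same arrival variables $\v A(i)$ and a shared uniform $U_i\sim\Unif(0,1)$ in every round, and then prove pathwise domination by induction on $t$. The specific structure of the hard instances is what makes this work. By \Cref{prop:lb_hard_instance_properties}, under $\nu_T^s$ every job in the queue has, whatever its context, a server with departure probability $p_T$ (namely any server in $\cal S_s(x)$), and every other server has departure probability $q_T$. The oracle therefore always serves at rate $p_T$ whenever its queue is nonempty. A deterministic policy serves at rate $p_T$ or $q_T$, and its choice is a function of its own history. In the coupling we set
\[
D^*(i)=\ind{U_i\le p_T}\ind{Q^*(i)>0},\qquad D^\pi(i)=\ind{U_i\le \mu_i}\ind{Q^\pi(i)>0},
\]
where $\mu_i\in\{p_T,q_T\}$ is the departure probability of the pair the policy selects. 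Since each departure has the correct conditional Bernoulli law given the respective history, both marginals are correct. The conditional arrival law does not depend on the queue state, so using the same $\v A(i)$ for both systems is legitimate. The contexts of arriving jobs can also be shared, because the oracle's departure rate is insensitive to contexts.

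The induction goes as follows. Both systems start from the same $\cal X_1$, so $Q^*(1)=Q^\pi(1)$. Suppose $Q^*(i)\le Q^\pi(i)$. If $Q^*(i)<Q^\pi(i)$, the two queues are integers, so the gap is at least $1$. Each departure variable lies in $\{0,1\}$ and arrivals are common, so after the update we still have $Q^*(i+1)\le Q^\pi(i+1)$; the map $[\cdot]^+$ is monotone. If $Q^*(i)=Q^\pi(i)=0$, neither system has a departure, and the equality is preserved. If $Q^*(i)=Q^\pi(i)>0$, then $\mu_i\le p_T$ gives $\ind{U_i\le\mu_i}\le\ind{U_i\le p_T}$. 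Hence $D^\pi(i)\le D^*(i)$, and again $Q^*(i+1)\le Q^\pi(i+1)$. This covers every round up to $T$.

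The delicate step is checking that the coupling is valid as a joint construction. The policy's action at round $i$ must be measurable with respect to its own past ($\v A$'s, its own departures, and any internal state). The shared $U_i$ must be independent of that past. This holds because the $U_i$ are i.i.d. and drawn fresh each round, so the conditional law of $D^\pi(i)$ given the policy's history is $\Bern(\mu_i)$, and the same holds for the oracle. The induction's key inequality also needs the oracle to attain the maximum probability $p_T$ for every queue state. This is exactly what the property ``every context has some server with probability $p_T$'' from \Cref{prop:lb_hard_instance_properties} provides. The argument would fail in general instances where the best available job depends on the queue composition, and here it is sidestepped.
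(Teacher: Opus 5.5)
Your proof is correct and follows the paper's argument: you share the arrivals and the uniform variables $U_i$ that generate departures, and you induct on rounds using the fact that on $\nu_T^s$ the oracle always serves at the maximal rate $p_T$. The only difference is cosmetic: you split cases by strict gap versus equal queue lengths, while the paper splits on whether $Q^*(t)=0$ or $Q^*(t)>0$.
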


Finally, let $F=\cbr{A(T-2)=1}$.
The following proposition is a conditional consequence of the KL chain rule and the Bretagnolle--Huber inequality (\Cref{lem:app_bh}).
\begin{proposition}
\label{prop:lb_conditioned_testing}
Fix a deterministic policy $\pi$ and two instances $\nu^+,\nu^-$ with the same arrival process and context distribution.
Let $\cal H_{T-1}=\sigma\br{\v A(1),\v D(1),\ldots,\v A(T-2),\v D(T-2)}$.
Write $\P_+$ and $\P_-$ for probabilities under $\nu^+$ and $\nu^-$, respectively.
Suppose that, up to round $T-2$, the conditional departure distributions under the two instances are either identical or equal to $\Bern(p)$ and $\Bern(q)$ in some order.
If it holds that
$$
    (T-2)
    \max\left\{
        \KL\br{\Bern(p)\|\Bern(q)},
        \KL\br{\Bern(q)\|\Bern(p)}
    \right\}
    \leq
    1/6$$, 
then for every event $G\in\cal H_{T-1}$, we have 
   $$ \P_+(G^c\mid F)+\P_-(G\mid F)
    \geq
    0.5 e^{-1/6}.$$
\end{proposition}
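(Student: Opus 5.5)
We plan to apply the Bretagnolle--Huber inequality (\Cref{lem:app_bh}) to the conditional laws $\P_+(\cdot\mid F)$ and $\P_-(\cdot\mid F)$ restricted to $\cal H_{T-1}$. This gives
\[
\P_+(G^c\mid F)+\P_-(G\mid F)\ge \tfrac12\exp\bigl(-\KL(\P_+(\cdot\mid F)|_{\cal H_{T-1}}\,\|\,\P_-(\cdot\mid F)|_{\cal H_{T-1}})\bigr).
\]
So it suffices to show that this conditional KL divergence is at most $1/6$.

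\textbf{Step 1: reduce the conditioning on $F$.} The event $F=\{A(T-2)=1\}$ depends only on the exogenous arrival in round $T-2$. Arrivals are independent Bernoulli$(\lambda)$ draws that are independent of everything before them, and both instances share the same arrival process and context distribution. Hence conditioning on $F$ only changes the marginal law of $\v A(T-2)$, forcing $A(T-2)=1$ with context drawn from $\cal D$. This change is identical under both instances, and it does not alter the conditional kernels of any other coordinate given the past. Concretely, we write the joint density of $(\v A(1),\v D(1),\ldots,\v A(T-2),\v D(T-2))$ under $\P_s(\cdot\mid F)$ as a product of conditional factors. The arrival factors are common to both instances (with the round-$(T-2)$ factor replaced by the conditioned one). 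The policy's action choice is a deterministic function of the history, since $\pi$ is deterministic, so it is also common. The only remaining factors are the departure kernels $\Bern(\mu_s(\cdot))$.

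\textbf{Step 2: KL chain rule.} Because the arrival and action factors cancel in the log-likelihood ratio, the chain rule gives
\[
\KL=\sum_{i=1}^{T-2}\E_{+}\bigl[\KL(P^+_{D(i)\mid\text{past}}\,\|\,P^-_{D(i)\mid\text{past}})\mid F\bigr].
\]
By hypothesis, each summand's inner divergence is either $0$ (identical kernels) or one of $\KL(\Bern(p)\|\Bern(q))$ and $\KL(\Bern(q)\|\Bern(p))$. Each summand is therefore at most the maximum $M$ of the two, so the total is at most $(T-2)M\le 1/6$. Substituting into Bretagnolle--Huber yields $0.5e^{-1/6}$.

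\textbf{Main obstacle.} The delicate point is justifying Step 1 rigorously, namely that conditioning on $F$ leaves the chain-rule structure intact. I would handle this by noting that $F$ is $\sigma(A(T-2))$-measurable and that $A(T-2)$ is independent of $\cal F_{T-2}$ and of the shared departure uniforms. Writing $\P_s(\cdot\cap F)/\lambda$ then shows explicitly that the density factorization is unchanged except at the arrival factor for round $T-2$, which is identical for $s=\pm$. A minor related point is the empty-queue dummy pair: its departure observation is either equally distributed under both instances or falls under the $p/q$ case, so it is covered by the hypothesis.
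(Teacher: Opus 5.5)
Your proposal is correct and follows essentially the same route as the paper: Bretagnolle--Huber applied to the conditional laws on $\mathcal{H}_{T-1}$ given $F$, plus the KL chain rule. In that chain rule the arrival, context, and deterministic action terms vanish, and each of the $T-2$ departure terms is at most the larger of the two Bernoulli KL divergences. Your argument that conditioning on $F$ changes only the shared round-$(T-2)$ arrival factor, via independence of $A(T-2)$, is a more careful version of the paper's one-line remark. Covering empty-queue rounds by the hypothesis, rather than noting that the departure is then deterministically zero, is equally valid.
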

Now, we are ready to start the proof of \Cref{thm:lower_bound}.

\begin{proof}[Proof of \Cref{thm:lower_bound}]
Fix a deterministic policy $\pi$.
Let $\nu_T^+,\nu_T^-$, $p_T$, and $q_T$ be as in \Cref{lem:lb_hard_instance}, and let $\cal S_+(\cdot)$ and $\cal S_-(\cdot)$ be as in \Cref{prop:lb_hard_instance_properties}.
For $s\in\{+,-\}$, write $\P_s$ and $\E_s$ for probability and expectation under $\nu_T^s$.
For each $s\in\{+,-\}$, define a comparison policy $\pi^s$ that agrees with $\pi$ in rounds $1,\ldots,T-2$ and, in round $T-1$, selects the same job as $\pi$ but chooses the smallest server in $\cal S_s(x)$ for that job context $x$.
Let $Q^s(T)$ and $D^s(T-1)$ be the queue length and departure random variable under $\pi^s$.
By \Cref{lem:lb_oracle_domination}, under $\nu_T^s$, $Q^*(T)\leq Q^s(T)$, and hence
\begin{align*}
    R_T(\pi;\nu_T^s)
    \geq
    \E_s\sqbr{Q^\pi(T)-Q^s(T)}.
\end{align*}
Recall that $F=\cbr{A(T-2)=1}$.
On $F$, the queue under $\pi$ is nonempty at the beginning of round $T-1$.
Since $\pi$ and $\pi^s$ have the same queue state at the beginning of round $T-1$ and the same arrival in round $T-1$,
\begin{align*}
    R_T(\pi;\nu_T^s)
    \geq
    \E_s\sqbr{
        \ind{F}
        \br{
            D^s(T-1)-D^\pi(T-1)
        }
    }.
\end{align*}
Let $(x_{T-1},a_{T-1})$ be the job-server pair selected by $\pi$ in round $T-1$, with an arbitrary fixed value if the queue is empty, and set
\begin{align*}
    G=\cbr{a_{T-1}\in\cal S_+(x_{T-1})}.
\end{align*}
On $F\cap G^c$, $\pi$ chooses a suboptimal server under $\nu_T^+$. On $F\cap G$, it chooses a suboptimal server under $\nu_T^-$.
Since $F$ depends only on the arrival process, $\P_+(F)=\P_-(F)=\lambda$.
By \Cref{prop:lb_hard_instance_properties}, under $\nu_T^+$, the conditional expectation of $D^+(T-1)-D^\pi(T-1)$ is $p_T-q_T$ on $F\cap G^c$ and zero on $F\cap G$.
Under $\nu_T^-$, the same conditional expectation is $p_T-q_T$ on $F\cap G$ and zero on $F\cap G^c$.
Thus we have
\begin{align}
    R_T(\pi;\nu_T^+)
    &\geq
    \lambda(p_T-q_T)\,\P_+(G^c\mid F),
    \qquad
    R_T(\pi;\nu_T^-)
    \geq
    \lambda(p_T-q_T)\,\P_-(G\mid F).
    \label{eq:proof_lower_regret}
\end{align}
Let $\cal H_{T-1}=\sigma\br{\v A(1),\v D(1),\ldots,\v A(T-2),\v D(T-2)}$.
Since $\pi$ is deterministic, $G\in\cal H_{T-1}$.
For each round up to $T-2$, if the queue is empty, the departure random variable is deterministically zero under both instances.
Otherwise, by \Cref{prop:lb_hard_instance_properties}, the conditional departure distributions under $\nu_T^+$ and $\nu_T^-$ are either identical or equal to $\Bern(p_T)$ and $\Bern(q_T)$ in some order.
By the Bernoulli KL bound (\Cref{lem:app_bernoulli_kl}) and \Cref{prop:lb_hard_instance_properties},
\begin{align*}
    &\max\cbr{
        \KL\br{\Bern(p_T)\|\Bern(q_T)},
        \KL\br{\Bern(q_T)\|\Bern(p_T)}
    }\\
    &\qquad\leq
    \frac{(p_T-q_T)^2}{2(3(1-\lambda-\eps)/16)}
    =
    \frac{1-\lambda-\eps}{6T}
    \leq
    \frac1{6T}.
\end{align*}
Thus the KL condition of \Cref{prop:lb_conditioned_testing} holds.
Applying \Cref{prop:lb_conditioned_testing} gives $\P_+(G^c\mid F)+\P_-(G\mid F)\geq e^{-1/6}/2$.
Substituting this bound into \Cref{eq:proof_lower_regret} gives
\begin{align*}
    R_T(\pi;\nu_T^+)+R_T(\pi;\nu_T^-)
    &\geq
    \lambda(p_T-q_T)
    \br{
        \P_+(G^c\mid F)+\P_-(G\mid F)
    }\\
    &\geq
    \frac{\lambda(p_T-q_T)e^{-1/6}}{2}
    =
    \frac{\lambda(1-\lambda-\eps)e^{-1/6}}{8\sqrt T},
\end{align*}
Therefore, $\max\cbr{R_T(\pi;\nu_T^+),R_T(\pi;\nu_T^-)}\geq\lambda(1-\lambda-\eps)e^{-1/6}/(16\sqrt T)$, which completes the proof.
For a randomized policy, conditioning on its internal random seed and averaging gives the same sum lower bound, and hence the same maximum lower bound.
\end{proof}

\section{Experiments} \label{sec:experiments}

We compare the empirical queue length regret of CQB-$\eta$-2 with standard baselines.
We use $T=2000$, $d=10$, $K=5$, $\lambda=0.7$, $\eps=0.1$, and $\kappa=50$.
The feature vectors and the unknown parameter are sampled coordinate-wise from $\Unif[-1,1]$, and instances that do not satisfy the slackness and $\kappa$ assumptions are rejected.
All curves are averaged over $10$ independent runs, and the error bands indicate $\pm 1$ standard deviation.
For the practical implementation, we use $\delta=T^{-1}$ and scale the confidence radius by $0.1$.
For CQB-$\eta$-2, we set the phase transition points to $\tau_1=0.2T$ and $\tau_2=0.3T$.
For CQB-$\varepsilon$, we set the initial random-exploration length to $\tau=0.2T$, which plays the same role as $\tau_1$ in CQB-$\eta$-2.
For ACQB, the random-exploration probability is $\eta(t)=\min\{1,c_1/\sqrt{t+1}\}$, and we set $c_1=20$.
The random policy selects both the job and the server uniformly at random, and FIFO+random selects the oldest job in the queue and a server uniformly at random.

\begin{figure}[t]
    \centering
    \includegraphics[width=\textwidth]{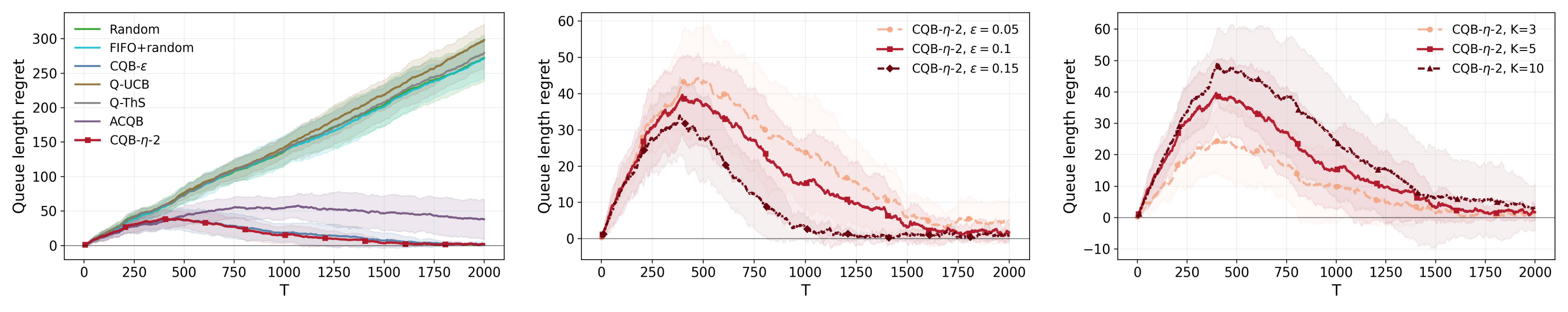}
    \caption{Queue length regret in synthetic experiments. Left: comparison with baselines. Middle: CQB-$\eta$-2 with $\eps\in\{0.05,0.1,0.15\}$. Right: CQB-$\eta$-2 with $K\in\{3,5,10\}$.}
    \label{fig:experiments}
\end{figure}

\Cref{fig:experiments} shows that CQB-$\eta$-2 and CQB-$\varepsilon$ have the smallest final regret in the baseline comparison.
The middle panel shows that larger slackness improves performance, and the right panel shows that CQB-$\eta$-2 remains stable across the tested values of $K$.
All experiments were conducted on a server equipped with an AMD EPYC 9354 32-Core Processor, 251 GiB of RAM, and one NVIDIA RTX A6000 GPU.

\section{Conclusion}

We studied contextual queueing bandits under stochastic contexts and showed that the queue length regret can be improved from the previous $\wtilde{\cal O}(T^{-1/4})$ rate to $\wtilde{\cal O}(T^{-1/2})$.
The main algorithmic idea is to stop random exploration after enough samples have been collected and use a pure UCB rule in the final phase.
We also proved an $\Omega(T^{-1/2})$ lower bound, showing that the dependence on $T$ is tight up to logarithmic factors.
An important direction for future work is to obtain lower bounds that also match the upper bound in the dependence on $d$ and $\kappa$.

\bibliographystyle{plainnat}
\bibliography{ref}

\clearpage
\appendix

\section{Related work}

\para{Queueing and contextual queueing bandits}
Queueing bandits study learning-while-scheduling problems in which unknown service rates must be learned while queue lengths are controlled \citep{krishnasamy2016regret,krishnasamy2021learning}.
This line of work includes queue length regret, dispatching under unknown service rates, decentralized queueing systems, MaxWeight-type learning algorithms, adversarial or nonstationary queueing models, and related queue-aware learning problems \citep{stahlbuhk2021learning,choudhury2021job,sentenac2021decentralized,freund2022efficient,yang2023learning,liang2018minimizing,huang2024lyapunov,krishnakumar2025minimizing,wijewardena2025bandit,gaitonde2020stability,hsu2022integrated}.
Classical queueing-control and scheduling works provide stability, drift, and routing tools that underlie these models \citep{lin1984optimal,neely2010stochastic,koole2002queueing,andrews2004scheduling,aksin2007modern,vilaplana2014queuing}.
Recent queueing and scheduling studies also consider modern service systems, including LLM inference and multi-LLM routing \citep{mitzenmacher2025queueing,yang2025queueingtheoreticperspectivelowlatency,fu2024efficient,lee2024design,jali2024efficient,bari2025optimal,murthy2024performance}.
Contextual queueing bandit variants incorporate job-specific features into departure models \citep{kim2024queueing,bae2026queue,bae2026learning}.
The closest prior works \citet{bae2026queue,bae2026learning} obtain $\wtilde{\cal O}(T^{-1/4})$ queue length regret under stochastic contexts, whereas we prove a $\wtilde{\cal O}(T^{-1/2})$ upper bound and an $\Omega(T^{-1/2})$ lower bound that match in $T$ up to logarithmic factors.

\para{Logistic bandits}
Our logistic departure model is related to generalized linear bandits, logistic bandits, and multinomial-logit bandits \citep{filippi2010parametric,abbasi2011improved,li2017provably,faury2020improved,jun2021improved,abeille2021instance,lee2024unified,bae2025neural,lee2025improved,zhang2025generalized,agrawal2019mnl,oh2019thompson,bae2026logistic}.
Related choice-model and routing bandits also study categorical feedback under logistic-type models \citep{chiang2025llm,wang2025mixllm,ong2024routellm,shirkavand2025cost,chen2024routerdc,zhuang2024embedllm}.
Unlike these works, contextual queueing bandits measure queue length regret, and the available job-server pairs depend on the queue state; this requires queue-specific arguments rather than a direct application of standard logistic-bandit regret analyses.

\section[Deferred proofs for section 4.1]{Deferred proofs for \cref{sec:upper_bound}}
\label{app:upper_bound_aux}

\subsection{Good events}
\label{app:upper_good_events}

Throughout this appendix, set $\delta=T^{-4}$ in the confidence radius.
For $t\in[T]$, define
\begin{align*}
    &\cal E_{\mathrm{pred}}(t)
    =
    \{\forall s\leq t, (x,a)\in\cal X\times[K], \\
    &\qquad\qquad\qquad |
        \mu(\langle\v\phi(x,a),\what{\v\theta}_{s-1}\rangle)
        -
        \mu(\langle\v\phi(x,a),\v\theta^*\rangle)
    |
    \leq
    \beta_{s-1}\norm{\v\phi(x,a)}_{\m V_{s-1}^{-1}}\}.
\end{align*}
Also define
\begin{align*}
    \cal E_{\mathrm{burn}}(t)
    =
    \{\forall s\in\{\tau_1+1,\ldots,t\}, (x,a)\in\cal X_s\times[K],~\beta_{s-1}\norm{\v\phi(x,a)}_{\m V_{s-1}^{-1}} \leq \eps/8\}.
\end{align*}
Let $\cal E_{\mathrm{drift}}(t)=\cal E_{\mathrm{pred}}(t)\cap\cal E_{\mathrm{burn}}(t)$.
We write $\cal E_{\mathrm{pred}}=\cal E_{\mathrm{pred}}(T)$, $\cal E_{\mathrm{burn}}=\cal E_{\mathrm{burn}}(T)$, and $\cal E_{\mathrm{drift}}=\cal E_{\mathrm{drift}}(T)$.
Then $\cal E_{\mathrm{pred}}(t)$, $\cal E_{\mathrm{burn}}(t)$, and $\cal E_{\mathrm{drift}}(t)$ are $\cal F_t$-measurable.
If $t_1\leq t_2$, then $\cal E_{\mathrm{drift}}(t_2)\subseteq\cal E_{\mathrm{drift}}(t_1)$.

Let
\begin{align*}
    N_{\mathrm{ph2}}
    =
    \sum_{s=\tau_1+1}^{\tau_2}
    \ind{A(s-1)=1,\ E(s-1)=1}
\end{align*}
and
\begin{align*}
    \m S_{\mathrm{ph2}}
    =
    \sum_{s=\tau_1+1}^{\tau_2}
    \ind{A(s-1)=1,\ E(s-1)=1}
    \v\phi_s\v\phi_s\tp.
\end{align*}
Define
\begin{align*}
    \cal E_{\mathrm{ph2}}
    =
    \cbr{
        \lambda_{\min}\br{\m S_{\mathrm{ph2}}}
        \geq
        \frac{\lambda\eps(\tau_2-\tau_1)\sigma_0^2}{16}
    }.
\end{align*}
Finally, let
\begin{align*}
    \nu_{\mathrm{ph2}}
    =
    \exp\br{-\frac{\lambda\eps(\tau_2-\tau_1)}{32}}
    +
    d\exp\br{-\frac{\lambda\eps(\tau_2-\tau_1)\sigma_0^2}{64}}.
\end{align*}

\begin{lemma}
\label{lem:app_good_event_prob}
With the confidence radius in \Cref{eq:beta} and the phase transition points in \Cref{eq:tau}, for sufficiently large absolute constants $c_1,c_2>0$, we have
\begin{align*}
    \P\br{\cal E_{\mathrm{pred}}^c}
    \leq
    T^{-4},
    \qquad
    \P\br{\cal E_{\mathrm{drift}}^c}
    \leq
    3T^{-4},
    \qquad
    \P\br{\cal E_{\mathrm{ph2}}^c}
    \leq
    \nu_{\mathrm{ph2}}.
\end{align*}
\end{lemma}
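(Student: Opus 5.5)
We prove the three bounds separately. For $\P(\cal E_{\mathrm{pred}}^c)\le T^{-4}$, we invoke the standard self-normalized confidence bound for the regularized logistic MLE (e.g., \citet{faury2020improved} or \citet{li2017provably}). With $\delta=T^{-4}$ it gives $\norm{\what{\v\theta}_{s-1}-\v\theta^*}_{\m V_{s-1}}\le\beta_{s-1}$ simultaneously for all $s$ with probability at least $1-\delta$. The $\kappa$ factor in \Cref{eq:beta} comes from converting the Hessian-weighted norm to the $\m V$-norm using $\dot\mu\ge1/\kappa$ (\Cref{ass:constants}). Then $\dot\mu\le1/4$ and Cauchy--Schwarz give $|\mu(\langle\v\phi,\what{\v\theta}\rangle)-\mu(\langle\v\phi,\v\theta^*\rangle)|\le\beta_{s-1}\norm{\v\phi}_{\m V_{s-1}^{-1}}$. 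The confidence bound is uniform in $s$, so this holds for all $s\le T$ and all $(x,a)$. The dummy-pair rounds are excluded from learning and do not affect the martingale argument.

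For $\cal E_{\mathrm{drift}}$, we bound $\P(\cal E_{\mathrm{burn}}^c)\le 2T^{-4}$ and take a union bound with $\cal E_{\mathrm{pred}}$. Since $\m V_{s-1}\succeq\m V_{\tau_1}$ for $s>\tau_1$ and $\beta_{s-1}\le\beta_T$, it suffices to show $\lambda_{\min}(\m V_{\tau_1})\ge 64\beta_T^2/\eps^2$.

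\begin{itemize}
\item \emph{Counting exploration rounds.} In Phase 1 every arrival triggers random exploration. The number of exploration rounds $N_1=\sum_{s\le\tau_1}A(s-1)$ is Binomial$(\tau_1,\lambda)$. A Chernoff bound gives $N_1\ge\lambda\tau_1/2$ except with probability $\exp(-\lambda\tau_1/8)\le\delta$, using $\tau_1\ge 8\log(1/\delta)/\lambda$.
\item \emph{Conditional independence.} On exploration rounds the features $\v\phi(x^{(s-1)},a_s)$ are i.i.d., since $x^{(s-1)}\sim\cal D$ and $a_s\sim\Unif([K])$. Their second-moment matrix $\Sigma$ satisfies $\lambda_{\min}(\Sigma)\ge\sigma_0^2$ by \Cref{ass:iid}.
\item \emph{Matrix concentration.} A matrix Chernoff/Bernstein bound, or the \citet{li2017provably} lemma giving $\lambda_{\min}\ge n\sigma_0^2/2$ once $n\gtrsim((c_1\sqrt d+c_2\sqrt{\log(1/\delta)})/\sigma_0^2)^2$, yields $\lambda_{\min}\ge N_1\sigma_0^2/2$ with probability $1-\delta$.
\item \emph{Conclusion.} Then $\lambda_{\min}(\m V_{\tau_1})\ge\lambda\tau_1\sigma_0^2/4\ge 64\beta_T^2/\eps^2$ by the second term of $\tau_1$.
\end{itemize}

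Conditional independence must be handled carefully, because the count $N_1$ is random. We will condition on the arrival indicators, which are exogenous and independent of the contexts and server draws.

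For $\cal E_{\mathrm{ph2}}$, the indicators $\ind{A(s-1)=1,E(s-1)=1}$ for $s\in(\tau_1,\tau_2]$ are i.i.d. Bernoulli$(\lambda\eps/4)$ and independent of the explored features. So $N_{\mathrm{ph2}}\ge\lambda\eps(\tau_2-\tau_1)/8$ except with probability $\exp(-\lambda\eps(\tau_2-\tau_1)/32)$, by a multiplicative Chernoff bound with deviation one half. Conditioned on this count $n$, $\m S_{\mathrm{ph2}}$ is a sum of $n$ i.i.d. PSD rank-one matrices with norm at most 1 and mean $n\Sigma$. The matrix Chernoff lower tail gives $\lambda_{\min}\ge n\sigma_0^2/2$ except with probability $d\exp(-n\sigma_0^2/8)\le d\exp(-\lambda\eps(\tau_2-\tau_1)\sigma_0^2/64)$. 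This yields the threshold $\lambda\eps(\tau_2-\tau_1)\sigma_0^2/16$ and the bound $\nu_{\mathrm{ph2}}$.

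The main obstacle is the Phase 1 eigenvalue step. There we must match the explicit $c_1\sqrt d+c_2\sqrt{\log(1/\delta)}$ form and make sure the random-count conditioning and the filtration structure (exploration samples interleaved with adaptive rounds) do not break independence. We will handle this by indexing exploration samples through the exogenous arrival/exploration indicators, which makes the explored features an i.i.d. sequence.
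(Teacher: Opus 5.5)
Your proposal is correct and follows essentially the same route as the paper. You import the logistic prediction-error bound for $\cal E_{\mathrm{pred}}$. You reduce $\cal E_{\mathrm{burn}}$ to $\lambda_{\min}(\m V_{\tau_1})\ge 64\beta_T^2/\eps^2$ using a Chernoff count together with a design-matrix eigenvalue bound on the i.i.d. Phase~1 exploration features, then take a union bound. You handle $\cal E_{\mathrm{ph2}}$ with the same Chernoff-plus-matrix-Chernoff argument as the paper. The only cosmetic difference is in the burn-in step: the paper applies Li et al.'s Proposition~1 directly with target $B_1=64\beta_T^2/\eps^2$ (via $\lambda\tau_1/2\ge n_{\mathrm{ph1}}$), whereas you use the multiplicative form $\lambda_{\min}\ge N_1\sigma_0^2/2$ and the second term of $\tau_1$, which works equally well.
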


\begin{proof}
The first bound follows from \Cref{lem:app_import_prediction_error} with confidence level $\delta=T^{-4}$.
\Cref{lem:app_import_burnin}, with the same confidence level, gives $\P\br{\cal E_{\mathrm{burn}}^c}\leq2T^{-4}$.
Therefore, by the union bound,
\begin{align*}
    \P\br{\cal E_{\mathrm{drift}}^c}
    \leq
    \P\br{\cal E_{\mathrm{pred}}^c}
    +
    \P\br{\cal E_{\mathrm{burn}}^c}
    \leq
    3T^{-4}.
\end{align*}

For $s\in\{\tau_1+1,\ldots,\tau_2\}$, set $Z_s=\ind{A(s-1)=1,\ E(s-1)=1}$.
Let $\cal G_s$ be the sigma-field generated before $A(s-1)$ and $E(s-1)$ are drawn.
Since the Phase 2 random exploration probability is $\eps/4$, we have $\E\sqbr{Z_s\mid\cal G_s}=\lambda\eps/4$.
We apply \Cref{lem:app_adapted_chernoff} with $n=\tau_2-\tau_1$ and $p=\lambda\eps/4$.
This gives
\begin{align}
    \P\br{
        N_{\mathrm{ph2}}
        <
        \frac{\lambda\eps(\tau_2-\tau_1)}{8}
    }
    \leq
    \exp\br{-\frac{\lambda\eps(\tau_2-\tau_1)}{32}}.
    \label{eq:app_ph2_count}
\end{align}

Conditional on the random exploration times, the corresponding feature vectors are i.i.d. with the same distribution as $\v\phi(X,J)$, where $X\sim\cal D$ and $J\sim\Unif([K])$.
By \Cref{ass:iid}, $\E\sqbr{\v\phi(X,J)\v\phi(X,J)\tp}=\E_{X\sim\cal D}\sqbr{K^{-1}\sum_{a\in[K]}\v\phi(X,a)\v\phi(X,a)\tp}\succeq\sigma_0^2\I$.
Also, \Cref{ass:basic} gives $\norm{\v\phi(X,J)}_2\leq1$.
On the event $N_{\mathrm{ph2}}\geq\lambda\eps(\tau_2-\tau_1)/8$, there are at least $\lceil\lambda\eps(\tau_2-\tau_1)/8\rceil$ Phase 2 random exploration samples.
We apply \Cref{lem:app_matrix_chernoff} to the first $\lceil\lambda\eps(\tau_2-\tau_1)/8\rceil$ such samples in chronological order.
Adding positive semidefinite matrices cannot decrease the minimum eigenvalue, so $\lambda_{\min}\br{\m S_{\mathrm{ph2}}}\geq \lambda\eps(\tau_2-\tau_1)\sigma_0^2/16$ with conditional probability at least $1-d\exp\br{-\lambda\eps(\tau_2-\tau_1)\sigma_0^2/64}$.
Combining this bound with \Cref{eq:app_ph2_count} gives $\P\br{\cal E_{\mathrm{ph2}}^c}\leq\nu_{\mathrm{ph2}}$.
\end{proof}

\begin{lemma}
\label{lem:app_ucb_gap_burnin}
Fix $t>\tau_1$ and suppose that the queue is nonempty at the beginning of round $t$.
Let $(\bar x_t,\bar a_t)$ be a job-server pair selected by the UCB rule, that is,
\begin{align*}
    (\bar x_t,\bar a_t)
    \in
    \argmax_{x\in\cal X_t,\,a\in[K]}
    \cbr{
        \mu(\langle\v\phi(x,a),\what{\v\theta}_{t-1}\rangle)
        +
        \beta_{t-1}\norm{\v\phi(x,a)}_{\m V_{t-1}^{-1}}
    },
    \qquad
    \bar{\v\phi}_t
    =
    \v\phi(\bar x_t,\bar a_t).
\end{align*}
On $\cal E_{\mathrm{drift}}(t)$,
\begin{align*}
    \mu(\langle\v\phi_t^*,\v\theta^*\rangle)
    -
    \mu(\langle\bar{\v\phi}_t,\v\theta^*\rangle)
    \leq
    \frac{\eps}{4}.
\end{align*}
\end{lemma}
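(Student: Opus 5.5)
This follows from the standard optimism argument for UCB, run on the event $\cal E_{\mathrm{drift}}(t)=\cal E_{\mathrm{pred}}(t)\cap\cal E_{\mathrm{burn}}(t)$. Write $b_{t}(x,a)=\beta_{t-1}\norm{\v\phi(x,a)}_{\m V_{t-1}^{-1}}$ and $\mathrm{UCB}_t(x,a)=\mu(\langle\v\phi(x,a),\what{\v\theta}_{t-1}\rangle)+b_t(x,a)$. The oracle pair $(x_t^*,a_t^*)$ and the UCB pair $(\bar x_t,\bar a_t)$ both lie in $\cal X_t\times[K]$, since the queue is nonempty. Hence both the prediction-error bound and the burn-in width bound apply to them at time $s=t$.

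\textbf{Step 1 (optimism).} On $\cal E_{\mathrm{pred}}(t)$, taking $s=t$ gives
\[
\mu(\langle\v\phi_t^*,\v\theta^*\rangle)\le \mathrm{UCB}_t(x_t^*,a_t^*)\le \mathrm{UCB}_t(\bar x_t,\bar a_t).
\]
The second inequality holds because $(\bar x_t,\bar a_t)$ maximizes $\mathrm{UCB}_t$ over $\cal X_t\times[K]$.

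\textbf{Step 2 (pessimism on the chosen pair).} Applying $\cal E_{\mathrm{pred}}(t)$ again at $(\bar x_t,\bar a_t)$ gives $\mu(\langle\bar{\v\phi}_t,\what{\v\theta}_{t-1}\rangle)\le \mu(\langle\bar{\v\phi}_t,\v\theta^*\rangle)+b_t(\bar x_t,\bar a_t)$. Therefore
\[
\mathrm{UCB}_t(\bar x_t,\bar a_t)\le \mu(\langle\bar{\v\phi}_t,\v\theta^*\rangle)+2b_t(\bar x_t,\bar a_t).
\]

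\textbf{Step 3 (burn-in width).} Since $t\ge\tau_1+1$ and $t\le t$, the event $\cal E_{\mathrm{burn}}(t)$ applies at $s=t$ to the pair $(\bar x_t,\bar a_t)\in\cal X_t\times[K]$. It gives $b_t(\bar x_t,\bar a_t)\le\eps/8$. Chaining Steps 1–3 yields
\[
\mu(\langle\v\phi_t^*,\v\theta^*\rangle)-\mu(\langle\bar{\v\phi}_t,\v\theta^*\rangle)\le 2\cdot\eps/8=\eps/4.
\]

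The only step needing care is the indexing. One must check that the prediction-error event at $s=t$ uses $\what{\v\theta}_{t-1}$, $\m V_{t-1}$ and $\beta_{t-1}$, exactly as in the UCB of \Cref{eq:ucb}. One must also check that $\cal E_{\mathrm{burn}}(t)$ covers the current queue state $\cal X_t$, which it does because the burn-in event quantifies over $(x,a)\in\cal X_s\times[K]$ with $s=t$.

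Two further points need no work. The UCB maximizer need not be unique; the argument applies to any maximizer. Nor do we need measurability subtleties, since the bound is deterministic on the event.
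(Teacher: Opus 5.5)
Your proposal is correct and follows essentially the same approach as the paper's proof. The paper uses the identical optimism chain on $\cal E_{\mathrm{pred}}(t)$ evaluated at $s=t$, giving a gap of at most $2\beta_{t-1}\norm{\bar{\v\phi}_t}_{\m V_{t-1}^{-1}}$, and then bounds this width by $\eps/4$ via $\cal E_{\mathrm{burn}}(t)$.
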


\begin{proof}
On $\cal E_{\mathrm{pred}}(t)$,
\begin{align*}
    \mu(\langle\v\phi_t^*,\v\theta^*\rangle)
    &\leq
    \mu(\langle\v\phi_t^*,\what{\v\theta}_{t-1}\rangle)
    +
    \beta_{t-1}\norm{\v\phi_t^*}_{\m V_{t-1}^{-1}}\\
    &\leq
    \mu(\langle\bar{\v\phi}_t,\what{\v\theta}_{t-1}\rangle)
    +
    \beta_{t-1}\norm{\bar{\v\phi}_t}_{\m V_{t-1}^{-1}}\\
    &\leq
    \mu(\langle\bar{\v\phi}_t,\v\theta^*\rangle)
    +
    2\beta_{t-1}\norm{\bar{\v\phi}_t}_{\m V_{t-1}^{-1}}.
\end{align*}
The second inequality follows from the definition of the UCB rule.
On $\cal E_{\mathrm{burn}}(t)$, the last term is at most $\eps/4$.
\end{proof}

\subsection[Proof of the late departure gap lemma]{Proof of \cref{lem:late_departure_gap}}
\label{app:proof_late_departure_gap}

\begin{proof}[Proof of \Cref{lem:late_departure_gap}]
Let
\begin{align*}
    g_t
    =
    \mu(\langle\v\phi_t^*,\v\theta^*\rangle)
    -
    \mu(\langle\v\phi_t,\v\theta^*\rangle).
\end{align*}
If the queue is empty in round $t$, we set $g_t=0$.
Hence it suffices to consider the case where the queue is nonempty.

On $\cal E_{\mathrm{ph2}}$, the matrix $\m V_{\tau_2}$ dominates the Phase 2 random-exploration design matrix, and therefore
\begin{align*}
    \lambda_{\min}(\m V_{\tau_2})
    \geq
    \frac{\lambda\eps(\tau_2-\tau_1)\sigma_0^2}{16}.
\end{align*}
For every $t\geq\tau_2+1$, we have $\m V_{t-1}\succeq\m V_{\tau_2}$, and hence, using $\norm{\v\phi_t}_2\leq1$,
\begin{align*}
    \norm{\v\phi_t}_{\m V_{t-1}^{-1}}^2
    \leq
    \frac{16}{\lambda\eps(\tau_2-\tau_1)\sigma_0^2}.
\end{align*}
Since $\tau_2-\tau_1\geq T/2$, we have
\begin{align}
    \norm{\v\phi_t}_{\m V_{t-1}^{-1}}^2
    \leq
    \frac{32}{\lambda\eps\sigma_0^2T}.
    \label{eq:app_late_uncertainty_T}
\end{align}

Now consider a Phase 3 round $t\in\{\tau_2+1,\ldots,T-1\}$.
By construction, Phase 3 uses the pure UCB rule.
On $\cal E_{\mathrm{pred}}$, for the optimal pair $(x_t^*,a_t^*)$ in the current queue and the pair $(x_t,a_t)$ selected by the algorithm,
\begin{align*}
    \mu(\langle\v\phi_t^*,\v\theta^*\rangle)
    &\leq
    \mu(\langle\v\phi_t^*,\what{\v\theta}_{t-1}\rangle)
    +
    \beta_{t-1}\norm{\v\phi_t^*}_{\m V_{t-1}^{-1}}\\
    &\leq
    \mu(\langle\v\phi_t,\what{\v\theta}_{t-1}\rangle)
    +
    \beta_{t-1}\norm{\v\phi_t}_{\m V_{t-1}^{-1}}\\
    &\leq
    \mu(\langle\v\phi_t,\v\theta^*\rangle)
    +
    2\beta_{t-1}\norm{\v\phi_t}_{\m V_{t-1}^{-1}}.
\end{align*}
The second inequality follows from the definition of the UCB rule.
Therefore, on $\cal E_{\mathrm{pred}}\cap\cal E_{\mathrm{ph2}}$,
\begin{align*}
    g_t
    \leq
    2\beta_{t-1}\norm{\v\phi_t}_{\m V_{t-1}^{-1}}
    \leq
    2\beta_T\norm{\v\phi_t}_{\m V_{t-1}^{-1}}.
\end{align*}
Using \Cref{eq:app_late_uncertainty_T}, on $\cal E_{\mathrm{pred}}\cap\cal E_{\mathrm{ph2}}$,
\begin{align*}
    g_t^2
    \leq
    \frac{128\beta_T^2}{\lambda\eps\sigma_0^2T}.
\end{align*}

Finally, since $0\leq g_t\leq1$, \Cref{lem:app_good_event_prob} gives
\begin{align*}
    \E\sqbr{g_t^2}
    &\leq
    \E\sqbr{
        g_t^2\ind{\cal E_{\mathrm{pred}}\cap\cal E_{\mathrm{ph2}}}
    }
    +
    \P\br{\cal E_{\mathrm{pred}}^c}
    +
    \P\br{\cal E_{\mathrm{ph2}}^c}\\
    &\leq
    \frac{128\beta_T^2}{\lambda\eps\sigma_0^2T}
    +
    T^{-4}
    +
    \nu_{\mathrm{ph2}}.
\end{align*}
Taking square roots gives the desired result.
\end{proof}

\subsection[Proof of the effect gap lemma]{Proof of \cref{lem:effect_gap}}
\label{app:proof_lemma_effect_gap}

\begin{proof}[Proof of \Cref{lem:effect_gap}]
We first record the negative drift property after Phase 1 for the coupled process governed by $\pi_{T-1}$.
This process has the same marginal law as the original queue governed by the learning policy.
Fix $t>\tau_1$ and suppose that $Q_{T-1}(t)>0$.
Let $\eta_t$ be an upper bound on the conditional probability that the random exploration rule is used in round $t$; in Phase 2, $\eta_t=\eps/4$, and in Phase 3, $\eta_t=0$.
Let $(x_t^*,a_t^*)$ be an optimal job-server pair in the queue of this process at the beginning of round $t$.
By \Cref{ass:slackness},
\begin{align*}
    \mu(\langle\v\phi_t^*,\v\theta^*\rangle)
    \geq
    \lambda+\eps.
\end{align*}
If the algorithm uses random exploration, the departure-rate loss relative to the optimal pair is at most one.
If it uses the UCB rule, the loss is at most $\eps/4$ on $\cal E_{\mathrm{drift}}(t)$ by \Cref{lem:app_ucb_gap_burnin}.
Therefore, on $\cal E_{\mathrm{drift}}(t)\cap\cbr{Q_{T-1}(t)>0}$,
\begin{align*}
    \E\sqbr{D_{T-1}(t)\mid\cal F_t}
    &\geq
    \mu(\langle\v\phi_t^*,\v\theta^*\rangle)
    -
    \eta_t
    -
    \frac{\eps}{4}\\
    &\geq
    \lambda+\eps-\frac{\eps}{4}-\frac{\eps}{4}
    =
    \lambda+\frac{\eps}{2}.
\end{align*}
Hence, whenever $Q_{T-1}(t)>0$ and $t>\tau_1$, on $\cal E_{\mathrm{drift}}(t)$,
\begin{align}
    \E\sqbr{A(t)-D_{T-1}(t)\mid\cal F_t}
    \leq
    -\frac{\eps}{2}.
    \label{eq:app_effect_policy_negative_drift}
\end{align}

We use this drift condition to obtain a tail bound for $Q_{T-1}(t)$.
Let $\zeta=\eps/2$, $\rho=\exp(-\eps^2/8)$, and $b=\exp(\zeta)$.
Define the deterministic bad-round count $B(t)=\min\{t,\tau_1\}$.
Thus rounds $1,\ldots,\tau_1$ are treated as bad rounds, and all later rounds have negative drift on $\cal E_{\mathrm{drift}}(t)$.
Define
\begin{align*}
    V(t)
    =
    \br{\frac{b}{\rho}}^{-B(t-1)}
    \exp\br{\zeta Q_{T-1}(t)}.
\end{align*}
We claim that, for every $t\in[T-1]$,
\begin{align}
    \E\sqbr{\ind{\cal E_{\mathrm{drift}}(t+1)}V(t+1)}
    \leq
    \rho
    \E\sqbr{\ind{\cal E_{\mathrm{drift}}(t)}V(t)}
    +
    1.
    \label{eq:app_effect_weighted_recursion}
\end{align}

To prove this, first consider a round $t>\tau_1$.
If $Q_{T-1}(t)>0$, then, on $\cal E_{\mathrm{drift}}(t)$, \Cref{eq:app_effect_policy_negative_drift} and \Cref{lem:app_cond_hoeffding} give
\begin{align*}
    \E\sqbr{\exp\br{\zeta(A(t)-D_{T-1}(t))}\mid\cal F_t}
    \leq
    \exp\br{-\zeta\eps/2+\zeta^2/2}
    =
    \exp(-\eps^2/8)
    =
    \rho.
\end{align*}
If $Q_{T-1}(t)=0$, then the actual departure from the queue is zero and $Q_{T-1}(t+1)=A(t)$, so
\begin{align*}
    \E\sqbr{\exp\br{\zeta Q_{T-1}(t+1)}\mid\cal F_t}
    \leq
    \exp(\zeta)
    \leq
    1+\rho,
\end{align*}
where the last inequality holds for $\eps\in(0,1)$.
Combining the two cases, for every $t>\tau_1$, on $\cal E_{\mathrm{drift}}(t)$,
\begin{align}
    \E\sqbr{\exp\br{\zeta Q_{T-1}(t+1)}\mid\cal F_t}
    \leq
    1+\rho\exp\br{\zeta Q_{T-1}(t)}.
    \label{eq:app_effect_good_round_mgf}
\end{align}
For $t\leq\tau_1$, we use the crude bound
\begin{align}
    \E\sqbr{\exp\br{\zeta Q_{T-1}(t+1)}\mid\cal F_t}
    \leq
    1+b\exp\br{\zeta Q_{T-1}(t)}.
    \label{eq:app_effect_bad_round_mgf}
\end{align}
Since $\cal E_{\mathrm{drift}}(t+1)\subseteq\cal E_{\mathrm{drift}}(t)$ and $\cal E_{\mathrm{drift}}(t)\in\cal F_t$, \Cref{eq:app_effect_good_round_mgf,eq:app_effect_bad_round_mgf} and the definition of $V(t)$ give \Cref{eq:app_effect_weighted_recursion}.
Solving the recursion and using $Q_{T-1}(1)=0$,
\begin{align}
    \E\sqbr{\ind{\cal E_{\mathrm{drift}}(t)}V(t)}
    \leq
    1+\frac{1}{1-\rho}
    \leq
    17\eps^{-2},
    \label{eq:app_effect_weighted_bound}
\end{align}
where we used $1-e^{-x}\geq x/2$ for $x\in[0,1]$.

Let $a=1+\eps/4$.
Since $a=\zeta^{-1}\log(b/\rho)$, $\cal E_{\mathrm{drift}}\subseteq\cal E_{\mathrm{drift}}(t)$, and $B(t-1)\leq\tau_1$, Markov's inequality and \Cref{eq:app_effect_weighted_bound} imply that, for every $t$ and every $y\geq0$,
\begin{align}
    \P\br{
        Q_{T-1}(t)\geq a\tau_1+y,\ \cal E_{\mathrm{drift}}
    }
    \leq
    17\eps^{-2}\exp(-\zeta y).
    \label{eq:app_effect_queue_tail}
\end{align}

We next bound the probability that a one-job discrepancy created in a fixed round $i$ survives until round $T$.
Let $n= T-i-1$.
If $n<4\tau_1/\eps$, then the trivial bound $\wtilde\psi_i(T)\leq1$ gives the first part of the lemma.
Thus assume $n\geq4\tau_1/\eps$.

Condition on $\cal F_i^+$ and on the disagreement event $\v D_i(i)=0$, $\v D_{i-1}(i)=1$.
Let $\bar Q_i(i+1)$ be the queue length of the process governed by $\pi_i$ at the beginning of round $i+1$ under this disagreement event.
After round $i$, the process governed by $\pi_i$ follows the oracle policy.
Let $H$ be the first time after round $i$ at which this process hits queue length zero.

By \Cref{lem:app_refined_coupling}, under the disagreement event, $\psi_i(T)\in\{0,1\}$, and $\psi_i(T)=1$ implies $H>T$.
Therefore,
\begin{align*}
    \wtilde\psi_i(T)
    \leq
    \P\br{
        H>T
        \mid
        \cal F_i^+,\v D_i(i)=0,\v D_{i-1}(i)=1
    }.
\end{align*}

We now bound this survival probability.
Suppose that $\bar Q_i(i+1)=q$.
For rounds $t=i+1,\ldots,T-1$, as long as the queue is nonempty, the oracle policy has departure probability at least $\lambda+\eps$.
Thus, for such rounds,
\begin{align*}
    \E\sqbr{A(t)-D_i(t)\mid\cal G_t}
    \leq
    -\eps,
\end{align*}
where $\cal G_t$ is the history of the process governed by $\pi_i$ up to the beginning of round $t$.
By \Cref{lem:app_cond_hoeffding} with $\zeta=\eps/2$,
\begin{align*}
    \E\sqbr{\exp\br{\zeta(A(t)-D_i(t))}\mid\cal G_t}
    \leq
    \exp\br{-\zeta\eps+\zeta^2/2}
    \leq
    \exp(-\eps^2/4).
\end{align*}

We apply this bound only before the hitting time.
In the next display, all probabilities and expectations are conditional on $\bar Q_i(i+1)=q$, $\cal F_i^+$, and the disagreement event $\v D_i(i)=0,\v D_{i-1}(i)=1$.
For $m\in\{i+1,\ldots,T\}$, let $\cal H_m=\cbr{H>m}$.
For $m\in\{i+1,\ldots,T-1\}$, on $\cal H_m$, the queue is nonempty at the beginning of round $m$, and hence the following one-step stopped inequality holds:
\begin{align*}
    &\E\sqbr{
        \ind{\cal H_{m+1}}
        \exp\br{
            \zeta
            \sum_{s=i+1}^{m}
            \br{A(s)-D_i(s)}
        }
        \mid
        \cal G_m
    }\\
    &\qquad
    \leq
    \ind{\cal H_m}
    \exp\br{
        \zeta
        \sum_{s=i+1}^{m-1}
        \br{A(s)-D_i(s)}
    }
    \E\sqbr{\exp\br{\zeta(A(m)-D_i(m))}\mid\cal G_m}\\
    &\qquad
    \leq
    \exp(-\eps^2/4)
    \ind{\cal H_m}
    \exp\br{
        \zeta
        \sum_{s=i+1}^{m-1}
        \br{A(s)-D_i(s)}
    }.
\end{align*}
Iterating the last display gives
\begin{align*}
    \E\sqbr{
        \ind{\cal H_T}
        \exp\br{
            \zeta
            \sum_{s=i+1}^{T-1}
            \br{A(s)-D_i(s)}
        }
    }
    \leq
    \exp\br{-\frac{\eps^2}{4}n}.
\end{align*}
On $\cal H_T$, we have
\begin{align*}
    q+\sum_{s=i+1}^{T-1}\br{A(s)-D_i(s)}\geq1.
\end{align*}
Therefore,
\begin{align}
    \P\br{
        H>T
        \mid
        \bar Q_i(i+1)=q,\cal F_i^+,\v D_i(i)=0,\v D_{i-1}(i)=1
    }
    \leq
    \min\cbr{
        1,
        \exp\br{
            \zeta(q-1)
            -
            \frac{\eps^2}{4}n
        }
    }.
    \label{eq:app_effect_survival_bound}
\end{align}
This stopped estimate handles the empty-queue case: no negative drift is asserted after the queue becomes empty.

It remains to average \Cref{eq:app_effect_survival_bound} over $\cal F_i^+$.
Set
\begin{align*}
    \omega
    =
    \frac{4\tau_1}{\eps},
    \qquad
    y_i
    =
    \frac{\eps}{4}(n-\omega).
\end{align*}
Since $n\geq\omega$, we have $y_i\geq0$.
Also, under the disagreement event, $\bar Q_i(i+1)\leq Q_{T-1}(i)+1$.
Hence \Cref{eq:app_effect_queue_tail} gives
\begin{align}
    \P\br{
        \bar Q_i(i+1)\geq a\tau_1+1+y_i,\ \cal E_{\mathrm{drift}}
    }
    \leq
    17\eps^{-2}\exp(-\zeta y_i).
    \label{eq:app_effect_barQ_tail}
\end{align}
On the complementary event $\bar Q_i(i+1)<a\tau_1+1+y_i$, \Cref{eq:app_effect_survival_bound} yields
\begin{align*}
    \P\br{
        H>T
        \mid
        \cal F_i^+,\v D_i(i)=0,\v D_{i-1}(i)=1
    }
    &\leq
    \exp\br{
        \zeta(a\tau_1+y_i)
        -
        \frac{\eps^2}{4}n
    }\\
    &\leq
    \exp\br{
        -\frac{\eps^2}{32}(n-\omega)
    }.
\end{align*}
The last inequality follows from $\zeta=\eps/2$, $a=1+\eps/4$, $y_i=\eps(n-\omega)/4$, $\omega=4\tau_1/\eps$, and $\eps\in(0,1)$.

Combining the bound on the event $\bar Q_i(i+1)<a\tau_1+1+y_i$ with \Cref{eq:app_effect_barQ_tail} and \Cref{lem:app_good_event_prob},
\begin{align*}
    \E\sqbr{\wtilde\psi_i(T)}
    &\leq
    \P\br{\cal E_{\mathrm{drift}}^c}
    +
    \exp\br{
        -\frac{\eps^2}{32}(n-\omega)
    }
    +
    17\eps^{-2}\exp(-\zeta y_i)\\
    &\leq
    3T^{-4}
    +
    19\eps^{-2}
    \exp\br{
        -\frac{\eps^2}{32}
        \br{T-i-1-\frac{4\tau_1}{\eps}}
    },
\end{align*}
where we used $\zeta y_i=\eps^2(n-\omega)/8$ and $\eps^{-2}\geq1$.
Taking square roots, using the trivial bound $\wtilde\psi_i(T)\leq1$, and relabeling $i$ as $t$ complete the proof.
\end{proof}

\section[Deferred proofs for section 4.2]{Deferred proofs for \cref{sec:lower_bound}}
\label{app:lower_bound_aux}

\subsection[Proof of hard instance lemma]{Proof of \cref{lem:lb_hard_instance}}

\begin{proof}[Proof of \Cref{lem:lb_hard_instance}]
Let $\gamma=1-\lambda-\eps$, $\Delta_T=\gamma/(8\sqrt T)$, $p_T=1-\gamma/2+\Delta_T$, and $q_T=1-\gamma/2-\Delta_T$.
Since $T\geq3$, we have $p_T\leq1-3\gamma/8<1$, $q_T\geq1-5\gamma/8\geq3/8$, and $q_T-(\lambda+\eps)\geq3\gamma/8>0$.
Thus $z_p$ and $z_q$ are well-defined, and $p_T>\lambda+\eps$.

It remains to verify that $\nu_T^+,\nu_T^-\in\cal C(d,K,S,\sigma_0^2,\lambda,\eps)$.
Since $\norm{x}_2=1$ and $\m M$ is orthogonal, $\norm{\v\phi(x,a)}_2=1$.
Also, if $X$ is uniform on $\cal X_{\mathrm{lb}}$, then $\E[XX\tp]=d^{-1}\I$ and
\begin{align*}
    \E\sqbr{
        \frac1K\sum_{a\in[K]}\v\phi(X,a)\v\phi(X,a)\tp
    }
    =
    \frac1d \I.
\end{align*}
Thus \Cref{ass:iid} holds for every $\sigma_0^2\leq1/d$.
For every $x\in\cal X_{\mathrm{lb}}$ and each sign $s\in\{+,-\}$, the logits $\langle\v\phi(x,a),\v\theta_s^*\rangle$ over $a\in[K]$ take values in $\cbr{z_p,z_q}$ and include $z_p$.
Therefore $\max_{a\in[K]}\mu(\langle\v\phi(x,a),\v\theta_s^*\rangle)=p_T>\lambda+\eps$, and the traffic slackness condition holds.
For the parameter radius, let $L_\gamma=\log\br{(8-3\gamma)/(3\gamma)}$.
The bounds $q_T\geq3/8$ and $p_T\leq1-3\gamma/8$ imply $\abs{z_p},\abs{z_q}\leq L_\gamma$.
Therefore
\begin{align*}
    \norm{\v\theta_\pm^*}_2
    =
    \sqrt{\frac d2\br{z_p^2+z_q^2}}
    \leq
    L_\gamma\sqrt d
    =
    B_{\lambda,\eps}\sqrt d
    \leq
    S.
\end{align*}
Finally, \Cref{ass:constants} holds for finite constants because all logits over $\Theta$ lie in $\sqbr{-S,S}$ and $\dot\mu$ is positive and continuous on this interval.
Hence $\nu_T^+,\nu_T^-\in\cal C(d,K,S,\sigma_0^2,\lambda,\eps)$.
\end{proof}

\subsection[Proof of hard instance properties]{Proof of \cref{prop:lb_hard_instance_properties}}

\begin{proof}[Proof of \Cref{prop:lb_hard_instance_properties}]
Let $\gamma=1-\lambda-\eps$.
For $x\in\cal X_{\mathrm{lb}}$, let $\xi(x)=\sqrt d\,x_2\in\{-1,+1\}$.
If $\xi(x)=+1$, then
\begin{align*}
    \langle\v\phi(x,1),\v\theta_+^*\rangle=z_p,
    \qquad
    \langle\v\phi(x,a),\v\theta_+^*\rangle=z_q
    \quad\text{for all }a\in\{2,\ldots,K\}.
\end{align*}
If $\xi(x)=-1$, then these two logits are reversed.
Thus, under $\nu_T^+$, servers in $\cal S_+(x)$ have departure probability $p_T$, and servers outside $\cal S_+(x)$ have departure probability $q_T$.
The same calculation with $\v\theta_-^*$ gives the corresponding statement under $\nu_T^-$.

It remains to verify the numerical bounds.
Since $T\geq3$, we have $p_T\leq1-3\gamma/8<1$, $q_T\geq1-5\gamma/8\geq3/8$, and $q_T-(\lambda+\eps)\geq3\gamma/8>0$.
Thus $p_T>q_T>\lambda+\eps$ and $p_T-q_T=\gamma/(4\sqrt T)$.
Also, $p_T\geq1/2$, $1-p_T\geq3\gamma/8$, $q_T\geq3/8$, and $1-q_T\geq\gamma/2$.
Thus
\begin{align*}
    p_T(1-p_T)\geq\frac{3\gamma}{16},
    \qquad
    q_T(1-q_T)\geq\frac{3\gamma}{16}.
\end{align*}
For every $u\in[q_T,p_T]$, the concavity of $u(1-u)$ gives
\begin{align*}
    u(1-u)
    \geq
    \min\cbr{p_T(1-p_T),q_T(1-q_T)}
    \geq
    \frac{3\gamma}{16}.
\end{align*}
Substituting $\gamma=1-\lambda-\eps$ completes the proof.
\end{proof}

\subsection[Proof of oracle domination lemma]{Proof of \cref{lem:lb_oracle_domination}}

\begin{proof}[Proof of \Cref{lem:lb_oracle_domination}]
Fix $s\in\{+,-\}$ and a deterministic policy $\rho$.
Let $Q^*(t)$ be the oracle queue under $\nu_T^s$, and let $Q^\rho(t)$ be the queue generated by $\rho$ under the same instance.
Couple the two processes by using the same arrival random variables and the same uniform random variables to generate departures.
We prove the claim by induction.
At $t=1$, both queues are empty.
Assume $Q^*(t)\leq Q^\rho(t)$.
If $Q^*(t)=0$, then $Q^*(t+1)=A(t)$, while $D^\rho(t)\leq Q^\rho(t)$ implies
\begin{align*}
    Q^\rho(t+1)=Q^\rho(t)+A(t)-D^\rho(t)\geq A(t).
\end{align*}
If $Q^*(t)>0$, then $Q^\rho(t)>0$ as well.
By \Cref{prop:lb_hard_instance_properties}, the oracle can choose a job-server pair with departure probability $p_T$, and every feasible job-server pair has departure probability at most $p_T$.
Let $r_t^\rho\leq p_T$ be the departure probability of the job-server pair selected by $\rho$.
Using the same uniform random variable $U_t$ for the departures, $D^*(t)=\ind{U_t\leq p_T}$ and $D^\rho(t)=\ind{U_t\leq r_t^\rho}$, so $D^*(t)\geq D^\rho(t)$.
Thus
\begin{align*}
    Q^*(t+1)
    =
    Q^*(t)+A(t)-D^*(t)
    \leq
    Q^\rho(t)+A(t)-D^\rho(t)
    =
    Q^\rho(t+1).
\end{align*}
\end{proof}

\subsection[Proof of conditional testing proposition]{Proof of \cref{prop:lb_conditioned_testing}}

\begin{proof}[Proof of \Cref{prop:lb_conditioned_testing}]
Let $P_+^F$ and $P_-^F$ be the conditional laws on $\cal H_{T-1}$ under $\nu^+$ and $\nu^-$ given $F$.
Since the initial queue is empty, the observations generating $\cal H_{T-1}$ determine the queue state before each round up to $T-1$.
For a deterministic policy, the selected job-server pair is therefore determined by the past history.
The two instances have the same arrival process and context distribution, and conditioning on $F=\cbr{A(T-2)=1}$ fixes only the shared arrival random variable $A(T-2)$.
Thus, in each round, only the conditional distribution of the departure random variable can differ between the two instances.
If the queue is empty, the departure random variable is deterministically zero under both instances.
Otherwise, by assumption, the two conditional distributions of the departure random variable are either identical or are $\Bern(p)$ and $\Bern(q)$ in some order.
Therefore, the KL chain rule bounds $\KL\br{P_+^F\|P_-^F}$ by the sum of the one-round conditional KL divergences.
The arrival and context terms contribute zero, and each departure term is at most the larger KL divergence between $\Bern(p)$ and $\Bern(q)$.
Thus,
\begin{align*}
    \KL\br{P_+^F\|P_-^F}
    \leq
    (T-2)
    \max\cbr{
        \KL\br{\Bern(p)\|\Bern(q)},
        \KL\br{\Bern(q)\|\Bern(p)}
    }.
\end{align*}
By assumption, the right-hand side is at most $1/6$.
By the Bretagnolle--Huber inequality (\Cref{lem:app_bh}) applied to $P_+^F$, $P_-^F$, and $G$,
\begin{align*}
    \P_+(G^c\mid F)+\P_-(G\mid F)
    =
    P_+^F(G^c)+P_-^F(G)
    \geq
    \frac12e^{-1/6}.
\end{align*}
\end{proof}

\section{Auxiliary lemmas}
\label{app:auxiliary_lemmas}

This section collects auxiliary results used in the appendix.
The following prediction-error bound is Lemma 28 of \citet{bae2026logistic}, stated in the notation of this paper.
\begin{lemma}
\label{lem:app_import_prediction_error}
With the confidence radius defined by \Cref{eq:beta}, it holds with probability at least $1-\delta$ that
\begin{align*}
    \abs{
        \mu(\langle\v\phi(x,a),\what{\v\theta}_{s-1}\rangle)
        -
        \mu(\langle\v\phi(x,a),\v\theta^*\rangle)
    }
    \leq
    \beta_{s-1}\norm{\v\phi(x,a)}_{\m V_{s-1}^{-1}}
\end{align*}
for all $s\in[T]$ and all $(x,a)\in\cal X\times[K]$.
\end{lemma}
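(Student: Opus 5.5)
The plan is to derive this prediction-error bound from a self-normalized confidence set for the regularized logistic MLE, followed by a mean-value argument to pass from parameter space to prediction space. Fix $s\in[T]$ and write $\m V_{s-1}=\lambda_0\I+\sum_{i<s}\v\phi_i\v\phi_i\tp$.

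Step one is to write the first-order optimality condition of \Cref{eq:mle}. Define $g_{s-1}(\v\theta)=\sum_{i<s}\mu(\langle\v\phi_i,\v\theta\rangle)\v\phi_i+\lambda_0\v\theta$. Then
\[
g_{s-1}(\what{\v\theta}_{s-1})-g_{s-1}(\v\theta^*)=\sum_{i<s}\eta_i\v\phi_i-\lambda_0\v\theta^*,
\]
where $\eta_i=D(i)-\mu(\langle\v\phi_i,\v\theta^*\rangle)$ is a $1/2$-sub-Gaussian martingale difference with respect to $\cal F_{i}$ augmented by the choice $(x_i,a_i)$. The projection onto $\Theta$ needs care. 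Either $\what{\v\theta}_{s-1}$ is an interior point, or one uses the variational inequality of the constrained minimizer; I would handle the latter via the standard trick of comparing through the convex objective, which yields the same inequality as an inner-product bound.

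Step two is the mean-value step. By the mean value theorem, $g_{s-1}(\what{\v\theta})-g_{s-1}(\v\theta^*)=\m G(\what{\v\theta}-\v\theta^*)$ with $\m G=\sum_i\alpha_i\v\phi_i\v\phi_i\tp+\lambda_0\I$, where $\alpha_i\ge1/\kappa$ by \Cref{ass:constants} (both parameters lie in $\Theta$, which is convex). Hence $\m G\succeq\kappa^{-1}\m V_{s-1}$ when $\kappa\ge1$. This gives
\[
\norm{\what{\v\theta}-\v\theta^*}_{\m V_{s-1}}\le\kappa\norm{\textstyle\sum_i\eta_i\v\phi_i}_{\m V_{s-1}^{-1}}+\sqrt{\kappa\lambda_0}\,S.
\]
The regularization term is the reason the radius contains the summand $S\sqrt{\kappa\lambda_0}$, and it must be tracked carefully.

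Step three applies the self-normalized martingale bound of Abbasi-Yadkori et al., uniformly over all $s$ by its stopping-time form. With probability $1-\delta$ it gives
\[
\norm{\textstyle\sum_i\eta_i\v\phi_i}_{\m V_{s-1}^{-1}}^2\le\tfrac14\cdot2\log\bigl(\det(\m V_{s-1})^{1/2}\det(\lambda_0\I)^{-1/2}/\delta\bigr).
\]
Combined with the determinant-trace inequality, this is at most $0.5d\log(1+s/(\lambda_0d))+0.5\log(1/\delta)$, matching $\beta_{s-1}$ up to the $\kappa$ inside the logarithm, which only enlarges the bound.

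Finally, I would conclude with $|\mu(\langle\v\phi,\what{\v\theta}\rangle)-\mu(\langle\v\phi,\v\theta^*\rangle)|\le\dot\mu_{\max}|\langle\v\phi,\what{\v\theta}-\v\theta^*\rangle|$ and Cauchy–Schwarz in the $\m V_{s-1}$ norm, using $\dot\mu\le1/4\le1$. The main obstacle is the constant bookkeeping in the projection and regularization step. In particular, the stated $\beta$ seems to place $\kappa$ multiplicatively on the noise term but not on the whole $\sqrt{\lambda_0}S$ term, so I would follow the exact form in \citet{bae2026logistic} and cite it for that step.
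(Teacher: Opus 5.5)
The paper does not prove this lemma; it imports it as Lemma 28 of \citet{bae2026logistic}. Your outline is the standard argument for such a result: the variational inequality for the constrained MLE, the integral mean-value matrix $\m G$ with weights at least $1/\kappa$ (by convexity of $\Theta$), self-normalized concentration, then $\dot\mu\le 1/4$ and Cauchy--Schwarz. Steps one and two are sound.

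Your worry about the $S\sqrt{\kappa\lambda_0}$ term is unfounded. Your Step-two display is correct provided you bound $\lambda_0\norm{\v\theta^*}_{\m G^{-1}}\le\sqrt{\lambda_0}S$ via $\m G\succeq\lambda_0\I$ before converting norms. Converting to the $\m V_{s-1}^{-1}$-norm first would instead give the weaker $\kappa\sqrt{\lambda_0}S$.

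The step that fails is Step three. You apply the self-normalized bound in the $\m V_{s-1}^{-1}$-norm, i.e. with regularizer $\lambda_0\I$, which produces $\log(1+(s-1)/(\lambda_0 d))$. You then claim the $\kappa$ inside the logarithm of $\beta_{s-1}$ ``only enlarges the bound.'' It does the opposite: since $\kappa\ge 4$, we have $\log(1+x/\kappa)<\log(1+x)$, so $\beta_{s-1}$ is smaller than what you derived.

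Even with your sharper constant $0.25d$, the inequality $0.25\log(1+x)\le 0.5\log(1+x/\kappa)$ holds only when $x\ge\kappa(\kappa-2)$, where $x=(s-1)/(\lambda_0 d)$. So for early rounds or large $\lambda_0$, your radius exceeds $\beta_{s-1}$. No fixed constant slack repairs this, including the factor $1/4$ from $\dot\mu\le 1/4$ that you discarded, because $\log(1+x)/\log(1+x/\kappa)\to\kappa$ as $x\to 0$.

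The fix is to concentrate in a $\kappa$-rescaled matrix rather than passing to $\m V_{s-1}$ first. Set $\m H_{s-1}=\lambda_0\I+\kappa^{-1}\sum_{i<s}\v\phi_i\v\phi_i\tp$, so that $\kappa^{-1}\m V_{s-1}\preceq\m H_{s-1}\preceq\m G$. Then
\begin{align*}
\norm{\what{\v\theta}_{s-1}-\v\theta^*}_{\m V_{s-1}}
\le\sqrt{\kappa}\,\norm{\sum_{i<s}\eta_i\v\phi_i-\lambda_0\v\theta^*}_{\m H_{s-1}^{-1}}
\le\kappa\,\norm{\sum_{i<s}\eta_i\v\phi_i}_{(\kappa\m H_{s-1})^{-1}}+\sqrt{\kappa\lambda_0}\,S .
\end{align*}
Here $\kappa\m H_{s-1}=\kappa\lambda_0\I+\sum_{i<s}\v\phi_i\v\phi_i\tp$ is a Gram matrix with regularizer $\kappa\lambda_0\I$. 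The self-normalized bound (uniform in $s$) with $1/2$-sub-Gaussian noise, plus the determinant--trace inequality, bounds the squared noise norm by $0.25d\log(1+(s-1)/(\kappa\lambda_0 d))+0.5\log(1/\delta)$. This fits inside $\beta_{s-1}$, and it is exactly where the $\kappa$ in the logarithm of \Cref{eq:beta} comes from.

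Also use $s-1$ rather than $s$ in the trace bound, since $\m V_{s-1}$ contains at most $s-1$ rank-one terms.
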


The following design-matrix lower bound is Proposition 1 of \citet{li2017provably}.
\begin{lemma}
\label{lem:app_li_design}
Let $Y_1,\ldots,Y_n$ be i.i.d. random vectors drawn from a distribution supported on the unit ball in $\R^d$.
Let $\Sigma=\E[Y_1Y_1\tp]$.
For any $B>0$ and $\delta>0$, there exist absolute constants $C_1,C_2>0$ such that
\begin{align*}
    \lambda_{\min}\br{\sum_{s=1}^n Y_sY_s\tp}
    \geq
    B
\end{align*}
with probability at least $1-\delta$, provided that
\begin{align*}
    n
    \geq
    \br{
        \frac{C_1\sqrt d+C_2\sqrt{\log(1/\delta)}}
        {\lambda_{\min}(\Sigma)}
    }^2
    +
    \frac{2B}{\lambda_{\min}(\Sigma)}.
\end{align*}
\end{lemma}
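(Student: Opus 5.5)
The statement just above is \Cref{lem:app_li_design}, the design-matrix lower bound of \citet{li2017provably}. I would prove it by a standard matrix-concentration argument: show that the empirical second-moment matrix $\what\Sigma_n=\frac1n\sum_{s=1}^n Y_sY_s\tp$ is close to $\Sigma$ in operator norm. Weyl's inequality then converts this into a lower bound on $\lambda_{\min}\br{\sum_s Y_sY_s\tp}$.

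Concretely, write $\lambda_\Sigma=\lambda_{\min}(\Sigma)$. I aim to show that with probability at least $1-\delta$,
\begin{align*}
    \norm{\what\Sigma_n-\Sigma}_{\mathrm{op}}
    \leq
    \frac{C_1\sqrt d+C_2\sqrt{\log(1/\delta)}}{\sqrt n}
    =:
    \epsilon_n .
\end{align*}
I would obtain this from the sub-Gaussian covariance estimation bound of Vershynin, since the $Y_s$ are bounded in the unit ball. The proof runs as follows.
\begin{itemize}
    \item Take a $1/4$-net $\cal N$ of the unit sphere, with $\abs{\cal N}\leq 9^d$.
    \item For each fixed $u\in\cal N$, the variables $\langle Y_s,u\rangle^2-u\tp\Sigma u$ are i.i.d., mean zero and bounded in $[-1,1]$, so Hoeffding's inequality gives a tail $2\exp(-nt^2/2)$ for their average.
    \item A union bound over $\cal N$, together with the standard net-to-sphere comparison (operator norm at most twice the net supremum for symmetric matrices), yields the display with absolute constants $C_1,C_2$.
\end{itemize}
Since the vectors are bounded, this is simpler than the sub-Gaussian case; I would not aim for a sharper bound such as $\sqrt{d/n}+d/n$.

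On that event, Weyl's inequality gives
\begin{align*}
    \lambda_{\min}\br{\sum_s Y_sY_s\tp}
    \geq
    n\br{\lambda_\Sigma-\epsilon_n}
    =
    n\lambda_\Sigma-\sqrt n\br{C_1\sqrt d+C_2\sqrt{\log(1/\delta)}} .
\end{align*}
It remains to check that the sample-size condition forces the right-hand side to be at least $B$. Set $a=(C_1\sqrt d+C_2\sqrt{\log(1/\delta)})/\lambda_\Sigma$, so the target reads $\lambda_\Sigma\br{n-a\sqrt n}\geq B$.

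The hypothesis is $n\geq a^2+2B/\lambda_\Sigma$. Since $n\geq a^2$, we have $\sqrt n\geq a$, and I split on the size of $a$.
\begin{itemize}
    \item If $a\sqrt n\leq n/2$, then $n-a\sqrt n\geq n/2\geq B/\lambda_\Sigma$, which is the target.
    \item Otherwise $a\sqrt n>n/2$, i.e.\ $n<4a^2$. This case needs more room, so I would enlarge the absolute constants (replacing $C_i$ by $2C_i$ in the hypothesis) to get $n\geq 4a^2$. Under that stronger hypothesis $a\leq\sqrt n/2$, so $a\sqrt n\leq n/2$ and we are back in the first case.
\end{itemize}
Because the lemma only asserts the existence of absolute constants, this rescaling is free. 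The conclusion is $\lambda_{\min}\geq\lambda_\Sigma n/2\geq B$.

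I expect the main obstacle to be bookkeeping of the absolute constants: matching the exact form $\br{(C_1\sqrt d+C_2\sqrt{\log(1/\delta)})/\lambda_\Sigma}^2+2B/\lambda_\Sigma$, and in particular the coefficient $2$ on $B$. The concentration step is standard, and I would simply cite it, either as Proposition 1 of \citet{li2017provably} or via Vershynin's covariance estimation theorem.
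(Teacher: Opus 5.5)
Your argument is correct. It is essentially the standard proof behind Proposition 1 of \citet{li2017provably}, which the paper cites without reproving: operator-norm concentration of $\frac1n\sum_s Y_sY_s\tp$ around $\Sigma$, then Weyl's inequality, then solving for $n$. Your net-plus-Hoeffding step is the bounded special case of the sub-Gaussian covariance estimate used there.

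One simplification: the case split and the doubling of $C_1,C_2$ are unnecessary. Set $b=B/\lambda_\Sigma$; the map $x\mapsto x^2-ax$ is increasing for $x\geq a/2$, so $\sqrt n\geq\sqrt{a^2+2b}$ gives $n-a\sqrt n\geq a^2+2b-a\sqrt{a^2+2b}\geq b$, where the last step is $(a^2+b)^2\geq a^2(a^2+2b)$. Hence the hypothesis already works as stated, with coefficient $2$ on $B$ and no rescaling of the constants.
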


The following burn-in uncertainty bound is adapted from Lemma 5.2 of \citet{bae2026queue}.
\begin{lemma}
\label{lem:app_import_burnin}
Suppose that $\tau_1$ is chosen according to \Cref{eq:tau} with sufficiently large absolute constants $c_1,c_2>0$.
Then, with probability at least $1-2\delta$,
\begin{align*}
    \beta_{t-1}\norm{\v\phi(x,a)}_{\m V_{t-1}^{-1}}
    \leq
    \eps/8
\end{align*}
for all $t\in\{\tau_1+1,\ldots,T\}$ and all $(x,a)\in\cal X_t\times[K]$.
\end{lemma}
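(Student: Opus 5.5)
The statement to prove is \Cref{lem:app_import_burnin}: with probability at least $1-2\delta$, every pair $(x,a)$ in the queue at every round $t>\tau_1$ has confidence width $\beta_{t-1}\norm{\v\phi(x,a)}_{\m V_{t-1}^{-1}}\le\eps/8$. The plan is to obtain this from the Phase~1 random exploration samples alone, using the monotonicity $\m V_{t-1}\succeq\m V_{\tau_1}$ for $t>\tau_1$. Since $\beta_{t-1}\le\beta_T$ and $\norm{\v\phi}_2\le1$,
\begin{align*}
\beta_{t-1}^2\norm{\v\phi(x,a)}_{\m V_{t-1}^{-1}}^2\le \frac{\beta_T^2}{\lambda_{\min}(\m V_{\tau_1})}.
\end{align*}
It therefore suffices to show that $\lambda_{\min}(\m V_{\tau_1})\ge 64\beta_T^2/\eps^2$ with probability at least $1-2\delta$. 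This uniform bound covers all $t\in\{\tau_1+1,\ldots,T\}$ and all pairs at once, so no union bound over time is needed.

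\textbf{Step 1 (number of exploration rounds).} In Phase~1 we have $\eta=1$, so every round $s\le\tau_1$ with $A(s-1)=1$ is a random exploration round. The count $N_1=\sum_{s\le\tau_1}A(s-1)$ has conditional mean $\lambda$ per round. By the multiplicative Chernoff bound (\Cref{lem:app_adapted_chernoff}), $\P(N_1<\lambda\tau_1/2)\le\exp(-\lambda\tau_1/8)$. Since $\tau_1\ge 8\log(1/\delta)/\lambda$, this probability is at most $\delta$. On the complementary event there are at least $\lambda\tau_1/2$ exploration samples.

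\textbf{Step 2 (design matrix).} At each exploration round the feature is $\v\phi(X,J)$, where $X\sim\cal D$ is the fresh arrival and $J\sim\Unif([K])$. Conditional on the exploration times these features are i.i.d., with second moment matrix $\Sigma\succeq\sigma_0^2\I$ by \Cref{ass:iid}. We apply \Cref{lem:app_li_design} to the first $n=\lceil\lambda\tau_1/2\rceil$ such samples with $B=64\beta_T^2/\eps^2$. Its sample-size requirement is
\begin{align*}
n\ge\Big(\frac{C_1\sqrt d+C_2\sqrt{\log(1/\delta)}}{\sigma_0^2}\Big)^2+\frac{128\beta_T^2}{\eps^2\sigma_0^2},
\end{align*}
and this holds because $\lambda\tau_1/2$ dominates the bracket in \Cref{eq:tau}, taking $c_1\ge C_1$ and $c_2\ge C_2$. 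Hence, with probability at least $1-\delta$, $\lambda_{\min}(\m V_{\tau_1})\ge\lambda_{\min}(\sum Y_sY_s\tp)\ge B$, because adding the PSD matrices $\lambda_0\I$ and the non-exploration features cannot decrease the minimum eigenvalue.

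\textbf{Step 3 (conclude).} A union bound over Steps~1 and~2 gives probability at least $1-2\delta$. On this event the displayed inequality yields $\beta_{t-1}\norm{\v\phi(x,a)}_{\m V_{t-1}^{-1}}\le\beta_T\cdot\eps/(8\beta_T)=\eps/8$.

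\textbf{Main obstacle.} The delicate step is the i.i.d.\ claim in Step~2. Which rounds explore depends only on $A(s-1)$ and $E(s-1)$, both independent of the arriving contexts, and the explored job is always the fresh arrival. So the explored features are independent of the stopping structure, and selecting the first $n$ exploration rounds in chronological order preserves the i.i.d.\ law. This argument must be stated carefully; it mirrors the one used in the proof of \Cref{lem:app_good_event_prob}.
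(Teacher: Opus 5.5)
Your proposal is correct and follows the paper's proof essentially step for step. Both arguments use the multiplicative Chernoff bound on the Phase~1 arrival count, apply the design lemma of \citet{li2017provably} to the first chronological exploration samples with $B=64\beta_T^2/\eps^2$, take a union bound, and finish with $\m V_{t-1}\succeq\m V_{\tau_1}$ and $\beta_{t-1}\leq\beta_T$. The only cosmetic difference is that you apply the design lemma to $\lceil\lambda\tau_1/2\rceil$ samples and check its sample-size condition directly against the bracket in \Cref{eq:tau}, whereas the paper applies it to $\lceil n_{\mathrm{ph1}}\rceil$ samples after noting that $\lambda\tau_1/2\geq n_{\mathrm{ph1}}$.
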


\begin{proof}[Proof of \Cref{lem:app_import_burnin}]
Let $B_1=64\beta_T^2/\eps^2$, and let $N_{\mathrm{ph1}}=\sum_{s=1}^{\tau_1}\ind{A(s-1)=1}$ be the number of Phase 1 random exploration samples.
During Phase 1, every arrival is used for random exploration, and the server is selected uniformly at random.
Conditional on the random exploration times, the corresponding feature vectors are i.i.d. with the same distribution as $\v\phi(X,J)$, where $X\sim\cal D$ and $J\sim\Unif([K])$.
By \Cref{ass:iid}, $\E\sqbr{\v\phi(X,J)\v\phi(X,J)\tp}\succeq\sigma_0^2\I$.
Set
\begin{align*}
    n_{\mathrm{ph1}}
    =
    \br{
        \frac{C_1\sqrt d+C_2\sqrt{\log(1/\delta)}}
        {\sigma_0^2}
    }^2
    +
    \frac{2B_1}{\sigma_0^2}.
\end{align*}
By taking the absolute constants $c_1,c_2$ in the definition of $\tau_1$ sufficiently large,
\begin{align*}
    \frac{\lambda\tau_1}{2}
    \geq
    n_{\mathrm{ph1}},
    \qquad
    \exp\br{-\frac{\lambda\tau_1}{8}}
    \leq
    \delta.
\end{align*}
For each $s\in[\tau_1]$, let $\cal G_s$ be the sigma-field generated by the history before $A(s-1)$ is drawn.
Then $\ind{A(s-1)=1}$ is $\{0,1\}$-valued and has conditional mean $\lambda$ given $\cal G_s$.
We apply \Cref{lem:app_adapted_chernoff} with $n=\tau_1$ and $p=\lambda$ to get
\begin{align*}
    \P\br{
        N_{\mathrm{ph1}}
        <
        n_{\mathrm{ph1}}
    }
    \leq
    \P\br{
        N_{\mathrm{ph1}}
        <
        \frac{\lambda\tau_1}{2}
    }
    \leq
    \delta.
\end{align*}
On the event $N_{\mathrm{ph1}}\geq n_{\mathrm{ph1}}$, we apply \Cref{lem:app_li_design} to the first $\lceil n_{\mathrm{ph1}}\rceil$ Phase 1 random exploration samples in chronological order, with $n=\lceil n_{\mathrm{ph1}}\rceil$, $B=B_1$, and $\lambda_{\min}(\Sigma)=\sigma_0^2$.
Adding positive semidefinite matrices cannot decrease the minimum eigenvalue, so $\lambda_{\min}(\m S_{\mathrm{ph1}})\geq B_1$ with conditional probability at least $1-\delta$, where
\begin{align*}
    \m S_{\mathrm{ph1}}
    =
    \sum_{s=1}^{\tau_1}\ind{A(s-1)=1}\v\phi_s\v\phi_s\tp
\end{align*}
is the design matrix formed by the Phase 1 random exploration samples.
By the union bound, this design lower bound holds with probability at least $1-2\delta$.

On this event, for every $t\in\{\tau_1+1,\ldots,T\}$,
\begin{align*}
    \m V_{t-1}
    \succeq
    \m V_{\tau_1}
    \succeq
    \m S_{\mathrm{ph1}}.
\end{align*}
Using $\norm{\v\phi(x,a)}_2\leq1$ and $\beta_{t-1}\leq\beta_T$,
\begin{align*}
    \beta_{t-1}\norm{\v\phi(x,a)}_{\m V_{t-1}^{-1}}
    \leq
    \frac{\beta_T}{\sqrt{B_1}}
    =
    \frac{\eps}{8}
\end{align*}
for all $(x,a)\in\cal X_t\times[K]$.
\end{proof}

The following coupling property follows from Lemmas B.1 and C.5 of \citet{bae2026queue} and Lemma E.1 of \citet{bae2026learning}, stated in the notation of this paper.
\begin{lemma}
\label{lem:app_refined_coupling}
Consider the coupled consecutive policy-switching queues governed by $\pi_t$ and $\pi_{t-1}$.
If $\v D_t(t)=\v D_{t-1}(t)$, then $\psi_t(T)\in\{-1,0\}$.
If $\v D_t(t)=0$ and $\v D_{t-1}(t)=1$, then $\psi_t(T)\in\{0,1\}$.
Moreover, under the event $\v D_t(t)=0$ and $\v D_{t-1}(t)=1$, if the process governed by $\pi_t$ hits queue length zero in a round after $t$ and no later than $T$, then $\psi_t(T)=0$.
Equivalently, $\psi_t(T)=1$ implies that this process does not hit queue length zero before round $T$.
\end{lemma}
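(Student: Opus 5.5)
The plan is to prove the lemma by a pathwise invariant on the pair of coupled queue states. By construction, the processes governed by $\pi_t$ and $\pi_{t-1}$ coincide up to the beginning of round $t$. From round $t+1$ on, both follow $\pi^*$ and are driven by the same arrivals $\v A(i)$ and the same uniforms $U_i$. Write $\cal A_i$ for the queue state of the $\pi_t$-process and $\cal B_i$ for that of the $\pi_{t-1}$-process. For a job $x$, let $r(x)=\max_a\mu(\langle\v\phi(x,a),\v\theta^*\rangle)$ be its best rate. The oracle in state $\cal Y$ serves a job maximizing $r$ and departs iff $U_i\le\max_{x\in\cal Y}r(x)$. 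I will track three possible relations between $\cal A_i$ and $\cal B_i$ (as multisets of contexts):
\begin{itemize}
\item[(E)] $\cal A_i=\cal B_i$;
\item[(S)] $\cal A_i=\cal S\cup\{y\}$ and $\cal B_i=\cal S\cup\{z\}$ with $r(y)\ge r(z)$;
\item[(P)] $\cal A_i=\cal B_i\cup\{w\}$.
\end{itemize}

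\textbf{Round $t$ and the initial relation.} Let $x_t$ be the job served by $\pi$ and $x_t^*$ the job served by $\pi^*$; by optimality $r(x_t^*)\ge\mu(\langle\v\phi_t,\v\theta^*\rangle)$. If $\v D_t(t)=\v D_{t-1}(t)=0$, nothing leaves either queue, so we are in (E). If both depart, then $\cal A_{t+1}$ keeps $x_t^*$ and $\cal B_{t+1}$ keeps $x_t$, which is (S) with $y=x_t^*$, $z=x_t$. (If the same job was chosen, this is (E).) The coupling $U_t\le\mu(\langle\v\phi_t,\v\theta^*\rangle)\Rightarrow U_t\le r(x_t^*)$ rules out the case where only $\pi_t$ departs. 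If $\v D_t(t)=0$ and $\v D_{t-1}(t)=1$, then $\cal A_{t+1}=\cal B_{t+1}\cup\{x_t^*\}$, which is (P). The shared arrival is added to both queues and does not affect any relation.

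\textbf{Closure of the relations.} Since the oracle is tie-broken by a fixed rule depending only on the state, (E) persists forever. In (S), $\max_{\cal A}r\ge\max_{\cal B}r$, so $D_{\cal B}=1$ implies $D_{\cal A}=1$.
\begin{itemize}
\item If both depart, each queue removes its top job. The remainders again differ by one swapped element with the dominance preserved, or they become equal. This needs a short case check on whether the removed top is $y$, $z$, or an element of $\cal S$; it is the same exchange argument as in round $t$.
\item If only $\cal A$ departs, its top must be $y$ (otherwise $\cal B$ would share that top and also depart). Then $\cal A'=\cal S=\cal B\setminus\{z\}$, which is (P) with the roles reversed, i.e., $\cal B'=\cal A'\cup\{z\}$.
\end{itemize}
In (P), $\max_{\cal A}r\ge\max_{\cal B}r$.
\begin{itemize}
\item If both depart, either both remove the same job, or $\cal A$ removes $w$ and $\cal B$ removes $b$, giving $\cal A'=\cal B$ and $\cal B'=\cal B\setminus\{b\}$, which is (P) again.
\item If only $\cal A$ departs, it removed $w$, giving (E).
\end{itemize}
The reversed version of (P) ($\cal B$ larger by one) is handled symmetrically. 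It can only turn into (E), and it arises only from branches starting in (S).

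\textbf{Reading off the claims.} Queue lengths satisfy $|\cal A|=|\cal B|$ in (E) and (S), $|\cal A|=|\cal B|+1$ in (P), and $|\cal A|=|\cal B|-1$ in reversed (P). Starting from equal departures we never enter (P) in the original orientation, so $\psi_t(T)=|\cal A_T|-|\cal B_T|\in\{-1,0\}$. Starting from the disagreement event we are in (P), which only moves to (P) or (E), so $\psi_t(T)\in\{0,1\}$. Finally, in (P), $\cal B\subseteq\cal A$. Hence if the $\pi_t$-process hits zero at some round $\le T$, then $\cal A=\emptyset$ forces $\cal B=\emptyset$, the states merge into (E), and $\psi_t(T)=0$.

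The main obstacle I expect is the bookkeeping in the (S) case. There I must verify that removing the respective maximizers preserves "one swapped element with dominance", including the tie cases and the case where the top of $\cal A$ lies in $\cal S$ while the top of $\cal B$ is $z$. I would handle this by first checking that the tie-breaking rule is consistent on common elements, and then doing an explicit three-way case split on the location of each maximizer.
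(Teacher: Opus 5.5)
Your proof is correct. The paper does not prove this lemma in the text; it imports it from Lemmas B.1 and C.5 of \citet{bae2026queue} and Lemma E.1 of \citet{bae2026learning}. Your argument is therefore a self-contained pathwise replacement for that citation. The three relations (E), (S), (P), plus the reversed (P) that can only be entered from (S), make two points explicit. First, equal departures at round $t$ can only produce $\psi_t(T)\le 0$, because the $\pi_t$-process keeps the better job. Second, the hitting-zero claim follows from $|\mathcal{A}_i|=|\mathcal{B}_i|+1\ge 1$ in (P) together with absorption of (E). Citing the prior lemmas buys brevity; your route makes the statement verifiable in place.

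One point needs tightening: tie-breaking.
\begin{itemize}
\item In (P) with both processes departing, your dichotomy (``either both remove the same job, or $\mathcal{A}$ removes $w$'') assumes the oracle breaks ties consistently on common elements.
\item Under an arbitrary state-dependent rule, $\mathcal{A}$ and $\mathcal{B}$ could remove two different jobs $b'\neq b$ of equal best rate from $\mathcal{B}$. Then $\mathcal{B}'\not\subseteq\mathcal{A}'$ as context multisets.
\item The same issue arises in (S) when both maximizers lie in $\mathcal{S}$, as you anticipated.
\end{itemize}
The cleanest fix is to run (E), (S), (P) on the multisets of best rates $\{r(x):x\in\mathcal{Y}\}$ rather than on multisets of contexts. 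Under $\pi^*$ with shared $U_i$ and shared arrivals, the queue-length trajectory depends only on these multisets. The process departs iff $U_i\le\max r$, and a departure removes one copy of $\max r$ regardless of which tied job is served. On rate multisets, all your case checks close without any assumption on ties. Your conclusions survive unchanged, since they use only cardinalities and the fact that (P) forces $|\mathcal{A}|\ge 1$.
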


We also use the following standard inequalities.

\begin{lemma}[Conditional Hoeffding lemma]
\label{lem:app_cond_hoeffding}
Let $X$ be a random variable satisfying $X\in[-1,1]$ almost surely.
If $\E[X\mid\cal G]\leq -a$ for some $a\geq0$, then for every $\zeta\geq0$,
\begin{align*}
    \E\sqbr{\exp(\zeta X)\mid\cal G}
    \leq
    \exp\br{-\zeta a+\frac{\zeta^2}{2}}.
\end{align*}
\end{lemma}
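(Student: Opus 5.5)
This is Hoeffding's lemma with a shifted mean, and the plan is a convexity argument. Write $m=\E[X\mid\cal G]\le -a$. By convexity of $u\mapsto e^{\zeta u}$ on $[-1,1]$, we have the pointwise bound
\[
e^{\zeta X}\le \frac{1-X}{2}e^{-\zeta}+\frac{1+X}{2}e^{\zeta}.
\]
Taking conditional expectations then gives
\[
\E[e^{\zeta X}\mid\cal G]\le \frac{1-m}{2}e^{-\zeta}+\frac{1+m}{2}e^{\zeta}.
\]
Since $\zeta\ge0$, the right-hand side is nondecreasing in $m$. So it suffices to treat the case $m=-a$ exactly, with $a\in[0,1]$. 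If $a>1$, the hypothesis is vacuous, because $X\ge -1$ forces $m\ge -1$.

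Next I reduce to the classical Hoeffding function. Set $p=(1-a)/2\in[0,1/2]$. The bound becomes $(1-p)e^{-\zeta}+pe^{\zeta}$. Factoring out $e^{-\zeta}$, this equals $\exp(-\zeta+L(2\zeta))$ with $L(h)=\log(1-p+pe^h)$. The standard facts are $L(0)=0$, $L'(0)=p$, and $L''(h)\le 1/4$, the last because $L''$ is a Bernoulli variance. Taylor's theorem then gives $L(2\zeta)\le 2p\zeta+\zeta^2/2$. Therefore the exponent is at most
\[
-\zeta+(1-a)\zeta+\frac{\zeta^2}{2}=-a\zeta+\frac{\zeta^2}{2},
\]
which is the claim.

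An alternative route is to apply the unconditional Hoeffding lemma to $X-m$, which is conditionally centered and lies in an interval of length $2$. That gives $\E[e^{\zeta(X-m)}\mid\cal G]\le e^{\zeta^2/2}$, and multiplying by $e^{\zeta m}\le e^{-\zeta a}$ finishes. This version needs a regular conditional distribution, or a conditional version of Hoeffding's lemma. The convexity argument above avoids that, so I would present it.

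The only step needing care is the bound $L''\le 1/4$. It follows from $L''(h)=r(1-r)$ with $r=pe^h/(1-p+pe^h)\in[0,1]$. Everything else is routine.
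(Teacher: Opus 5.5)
Your proof is correct. The chord bound $e^{\zeta X}\le \frac{1-X}{2}e^{-\zeta}+\frac{1+X}{2}e^{\zeta}$ gives $\E[e^{\zeta X}\mid\mathcal G]\le\cosh\zeta+m\sinh\zeta$, which is nondecreasing in $m$. The Taylor bound on $L(h)=\log(1-p+pe^h)$ with $L''\le 1/4$ then yields exactly $\exp(-a\zeta+\zeta^2/2)$.

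The paper does not prove this lemma at all; it lists it among ``standard inequalities.'' Your argument is the classical proof of Hoeffding's lemma carried out conditionally, which is what that citation implicitly relies on.

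One point in favor of the route you chose over the shift-by-$m$ alternative: the bound $\E[e^{\zeta X}\mid\mathcal G]\le\cosh\zeta+m\sinh\zeta$ holds almost surely, and all remaining steps are deterministic in $m$. So the argument applies verbatim on any $\mathcal G$-measurable event where $m\le -a$. That is how the paper actually uses the lemma, for instance only on $\mathcal E_{\mathrm{drift}}(t)\cap\{Q_{T-1}(t)>0\}$.
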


\begin{lemma}[Multiplicative Chernoff bound]
\label{lem:app_adapted_chernoff}
Let $Z_1,\ldots,Z_n$ be $\{0,1\}$-valued random variables adapted to a filtration $\{\cal G_i\}_{i=0}^n$.
If $\E[Z_i\mid\cal G_{i-1}]\geq p$ for every $i$, then
\begin{align*}
    \P\br{
        \sum_{i=1}^n Z_i
        <
        \frac{pn}{2}
    }
    \leq
    \exp\br{-\frac{pn}{8}}.
\end{align*}
\end{lemma}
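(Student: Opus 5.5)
The plan is to prove the multiplicative Chernoff bound by the standard exponential-moment argument, adapted to the martingale setting. First reduce to the case of exact conditional mean, or argue directly with the lower bound on the conditional mean. Fix $s>0$ and consider $M_k=\exp\br{-s\sum_{i=1}^k Z_i}$. Since $Z_i\in\{0,1\}$, we have $\E[e^{-sZ_i}\mid\cal G_{i-1}]=1-(1-e^{-s})\E[Z_i\mid\cal G_{i-1}]\leq 1-(1-e^{-s})p\leq \exp\br{-(1-e^{-s})p}$. Here the first step uses linearity of $z\mapsto e^{-sz}$ on $\{0,1\}$, and the monotonicity uses $1-e^{-s}\geq0$ together with $\E[Z_i\mid\cal G_{i-1}]\geq p$.

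Next, I will iterate by the tower property. Because $Z_1,\ldots,Z_{k-1}$ are $\cal G_{k-1}$-measurable, $\E[M_k]=\E\sqbr{M_{k-1}\E[e^{-sZ_k}\mid\cal G_{k-1}]}\leq \exp\br{-(1-e^{-s})p}\E[M_{k-1}]$. Iterating gives $\E[M_n]\leq\exp\br{-(1-e^{-s})pn}$.

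Then I apply Markov's inequality to $M_n$:
\begin{align*}
\P\br{\sum_{i=1}^n Z_i<\frac{pn}{2}}\leq \P\br{M_n\geq e^{-spn/2}}\leq \exp\br{-pn\br{1-e^{-s}-s/2}}.
\end{align*}
The choice $s=\log 2$ gives exponent $pn(1/2-\log(2)/2)\approx 0.153\,pn\geq pn/8$, which already suffices. Alternatively, optimizing with $e^{-s}=1/2$ recovers the classical $(1-\delta)$-form with $\delta=1/2$, namely $\exp(-\delta^2 pn/2)=\exp(-pn/8)$. In that form one uses $1-e^{-s}-s(1-\delta)$ at $s=-\log(1-\delta)$, which equals $\delta+(1-\delta)\log(1-\delta)\geq\delta^2/2$.

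No step is a real obstacle. The only point needing care is that the conditional-mean hypothesis is an inequality rather than an equality, and this is handled by the monotonicity observation in the first step. One should also note that the bound is trivially valid if $pn$ is small, since the right-hand side is then close to one.
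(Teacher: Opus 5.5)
Your proof is correct and complete. The identity $e^{-sZ_i}=1-(1-e^{-s})Z_i$ on $\{0,1\}$ combines with the conditional-mean lower bound, the tower property and Markov's inequality. With $s=\log 2$ this gives exponent $\tfrac12(1-\log 2)\,pn\approx 0.153\,pn\geq pn/8$.

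The paper gives no proof of this lemma; it only lists it among ``standard inequalities.'' Your exponential-supermartingale argument is the textbook proof one would supply there.

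One small clean-up concerns your closing remark about small $pn$. It is unnecessary, and it is not quite right as stated, since a right-hand side close to one is not automatically a valid bound. The only degenerate case is $p\leq 0$, where the event $\{\sum_i Z_i<pn/2\}$ is empty. For $p>0$ your chain of inequalities already covers every $n$.
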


\begin{lemma}[Matrix Chernoff bound]
\label{lem:app_matrix_chernoff}
Let $Y_1,\ldots,Y_n\in\R^d$ be i.i.d. random vectors satisfying $\norm{Y_i}_2\leq1$ almost surely and $\E[Y_iY_i\tp]\succeq\sigma_0^2\I$.
Then
\begin{align*}
    \P\br{
        \lambda_{\min}\br{\sum_{i=1}^n Y_iY_i\tp}
        <
        \frac{n\sigma_0^2}{2}
    }
    \leq
    d\exp\br{-\frac{n\sigma_0^2}{8}}.
\end{align*}
\end{lemma}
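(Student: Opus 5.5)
The plan is to derive this from the lower-tail matrix Chernoff inequality of Tropp (the master tail bound built on Lieb's concavity theorem), applied to $\m X_i=Y_iY_i\tp$. These are independent, positive semidefinite, and satisfy $\lambda_{\max}(\m X_i)=\norm{Y_i}_2^2\leq1$, so the uniform bound is $R=1$. Write $\mu_{\min}=\lambda_{\min}\br{\sum_{i=1}^n\E[\m X_i]}$. The assumption $\E[Y_iY_i\tp]\succeq\sigma_0^2\I$ gives $\mu_{\min}\geq n\sigma_0^2$.

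First, reduce to a statement about $\mu_{\min}$. Since $n\sigma_0^2/2\leq\mu_{\min}/2$, the event $\cbr{\lambda_{\min}(\sum_i\m X_i)<n\sigma_0^2/2}$ is contained in $\cbr{\lambda_{\min}(\sum_i\m X_i)\leq(1-\delta')\mu_{\min}}$ with $\delta'=1/2$. So it suffices to show
\begin{align*}
\P\br{\lambda_{\min}\br{\textstyle\sum_i\m X_i}\leq(1-\delta')\mu_{\min}}\leq d\exp\br{-\delta'^2\mu_{\min}/2}.
\end{align*}
At $\delta'=1/2$ the right-hand side is $d\exp(-\mu_{\min}/8)$, which is at most $d\exp(-n\sigma_0^2/8)$ because the bound decreases in $\mu_{\min}$.

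Second, the core argument for the displayed bound. For $\theta>0$, Markov's inequality applied to the trace exponential, followed by Lieb's theorem (subadditivity of matrix cumulant generating functions), gives
\begin{align*}
\P\br{\lambda_{\min}\br{\textstyle\sum_i\m X_i}\leq t}\leq e^{\theta t}\,\mathrm{tr}\exp\br{\textstyle\sum_i\log\E e^{-\theta\m X_i}}.
\end{align*}
Convexity of $x\mapsto e^{-\theta x}$ on $[0,1]$ gives $e^{-\theta x}\leq1+(e^{-\theta}-1)x$ there. Since each $\m X_i$ has spectrum in $[0,1]$, this transfers to the semidefinite order: $\E e^{-\theta\m X_i}\preceq\I+(e^{-\theta}-1)\E\m X_i\preceq\exp\br{(e^{-\theta}-1)\E\m X_i}$. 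Operator monotonicity of $\log$ then lets us replace each $\log\E e^{-\theta\m X_i}$ by $(e^{-\theta}-1)\E\m X_i$. Monotonicity of the trace exponential, together with $\mathrm{tr}\exp(\m A)\leq d\,e^{\lambda_{\max}(\m A)}$, yields the bound $d\exp\br{\theta t+(e^{-\theta}-1)\mu_{\min}}$; here the negative coefficient turns $\lambda_{\max}$ into $\mu_{\min}$.

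Third, choose $t=(1-\delta')\mu_{\min}$ and $\theta=-\log(1-\delta')$. This gives $d\sqbr{e^{-\delta'}/(1-\delta')^{1-\delta'}}^{\mu_{\min}}$. The scalar inequality $-\delta'-(1-\delta')\log(1-\delta')\leq-\delta'^2/2$ for $\delta'\in[0,1)$ then finishes the proof.

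The main obstacle is the step controlling the sum of matrix logarithms, which requires Lieb's concavity theorem; I would cite it (via Tropp's master bound) rather than reprove it. The remaining steps are scalar calculus and the monotonicity reduction from $n\sigma_0^2$ to $\mu_{\min}$.
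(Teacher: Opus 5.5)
Your proposal is correct and takes the same route the paper relies on. The paper gives no proof and lists this among its standard inequalities; it is Tropp's lower-tail matrix Chernoff bound specialized to $R=1$, $\delta'=1/2$, and $\mu_{\min}\geq n\sigma_0^2$, and your re-derivation of that bound is sound (master bound via Lieb's theorem, chord bound transferred to the semidefinite order, operator monotonicity of $\log$, and $-\delta'-(1-\delta')\log(1-\delta')\leq-\delta'^2/2$).
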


\begin{lemma}
\label{lem:app_bernoulli_kl}
Let $u,v\in(0,1)$.
If $w(1-w)\geq m$ for every $w\in[\min\cbr{u,v},\max\cbr{u,v}]$, then
\begin{align*}
    \KL\br{\Bern(u)\|\Bern(v)}
    \leq
    \frac{(u-v)^2}{2m}.
\end{align*}
\end{lemma}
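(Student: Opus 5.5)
The claim is the Bernoulli KL bound of \Cref{lem:app_bernoulli_kl}. I will prove it with the standard $\chi^2$ upper bound on KL, and then control the $\chi^2$ denominator using the curvature assumption. The first step uses $\log y\le y-1$ term by term:
\begin{align*}
    \KL\br{\Bern(u)\|\Bern(v)}
    =
    u\log\frac{u}{v}+(1-u)\log\frac{1-u}{1-v}
    \leq
    u\br{\frac{u}{v}-1}+(1-u)\br{\frac{1-u}{1-v}-1}
    =
    \frac{(u-v)^2}{v(1-v)}.
\end{align*}
The right-hand side is the $\chi^2$ divergence. Since $v$ lies in $[\min\{u,v\},\max\{u,v\}]$, the hypothesis gives $v(1-v)\geq m$. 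This already yields $\KL\leq (u-v)^2/m$, which is off from the claim by a factor of two.

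To recover the factor $1/2$, I would use an integral (Taylor) representation instead. Fix $v$ and set $f(w)=\KL\br{\Bern(w)\|\Bern(v)}$ for $w$ between $u$ and $v$. Direct differentiation gives
\begin{align*}
    f(v)=0,\qquad f'(w)=\log\frac{w}{v}-\log\frac{1-w}{1-v},\qquad f'(v)=0,\qquad f''(w)=\frac{1}{w(1-w)}.
\end{align*}
Taylor's theorem with integral remainder then gives
\begin{align*}
    f(u)=\int_v^u (u-w)\,f''(w)\,dw .
\end{align*}
The hypothesis holds at every $w$ in the interval between $u$ and $v$, so $f''(w)=1/(w(1-w))\leq 1/m$ throughout the interval of integration. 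The integral is therefore at most $\frac1m\int_v^u (u-w)\,dw=(u-v)^2/(2m)$. The same computation applies whether $u<v$ or $u>v$, because the integrand $(u-w)f''(w)$ keeps a consistent sign relative to the orientation of the integral.

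The only point needing care is using the curvature bound at every intermediate point rather than only at the endpoint $v$. This is exactly why the hypothesis is imposed on the whole interval between $u$ and $v$, and it is the only step where the factor two is gained over the naive $\chi^2$ bound.
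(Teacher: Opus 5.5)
Your proof is correct and takes essentially the same route as the paper: a second-order Taylor expansion of $f(w)=\KL(\Bern(w)\|\Bern(v))$ about $v$, using $f(v)=f'(v)=0$ and $f''(w)=1/(w(1-w))\le 1/m$ on the interval between $u$ and $v$. The only differences are that the paper uses the Lagrange (mean-value) form of the remainder at a single intermediate point $\bar r$ where you use the integral form, and your opening $\chi^2$ bound is a harmless detour that the argument does not need.
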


\begin{proof}[Proof of \Cref{lem:app_bernoulli_kl}]
Fix $v\in(0,1)$ and define $f(r)=\KL\br{\Bern(r)\|\Bern(v)}$ for $r\in(0,1)$.
Then $f(v)=0$, $f'(v)=0$, and $f''(r)=1/(r(1-r))$.
By Taylor's theorem, for some point $\bar r$ between $u$ and $v$,
\begin{align*}
    \KL\br{\Bern(u)\|\Bern(v)}
    =
    f(u)
    =
    \frac{(u-v)^2}{2\bar r(1-\bar r)}.
\end{align*}
The point $\bar r$ lies between $u$ and $v$.
Thus $\bar r(1-\bar r)\geq m$ by assumption, and the desired bound follows.
\end{proof}

\begin{lemma}[Bretagnolle--Huber inequality]
\label{lem:app_bh}
For any probability measures $P,Q$ and event $G$,
\begin{align*}
    P(G^c)+Q(G)
    \geq
    \frac12\exp\cbr{-\KL\br{P\|Q}}.
\end{align*}
\end{lemma}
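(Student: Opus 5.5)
This is the Bretagnolle--Huber inequality, a standard fact. I would prove it in the usual two steps: first bound the testing error from below by the Bhattacharyya-type overlap $\int\sqrt{pq}$, then bound that overlap from below by the KL divergence using Jensen.

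\emph{Reduction.} If $\KL\br{P\|Q}=\infty$ the claim is trivial, so assume $P\ll Q$. Let $\mu=P+Q$ be a dominating measure and write $p=dP/d\mu$ and $q=dQ/d\mu$. For any event $G$ we have $P(G^c)+Q(G)\geq\int\min\cbr{p,q}\,d\mu$, since on $G^c$ the integrand $p$ dominates $\min\cbr{p,q}$ and on $G$ the integrand $q$ does.

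\emph{Overlap bound.} The key step is
\begin{align*}
    \br{\int\sqrt{pq}\,d\mu}^2\leq\int\min\cbr{p,q}\,d\mu\int\max\cbr{p,q}\,d\mu,
\end{align*}
which follows from Cauchy--Schwarz and $\sqrt{pq}=\sqrt{\min\cbr{p,q}\max\cbr{p,q}}$. We also have $\int\max\cbr{p,q}\,d\mu=2-\int\min\cbr{p,q}\,d\mu\leq2$. Combining,
\begin{align*}
    \int\min\cbr{p,q}\,d\mu\geq\frac12\br{\int\sqrt{pq}\,d\mu}^2.
\end{align*}

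\emph{Jensen step.} It remains to show $\br{\int\sqrt{pq}\,d\mu}^2\geq\exp\br{-\KL\br{P\|Q}}$. Restrict the integral to $\cbr{p>0}$ and write
\begin{align*}
    \int\sqrt{pq}\,d\mu=\E_P\sqbr{\exp\br{\tfrac12\log(q/p)}}.
\end{align*}
By Jensen's inequality for the convex function $\exp$,
\begin{align*}
    \E_P\sqbr{\exp\br{\tfrac12\log(q/p)}}\geq\exp\br{\tfrac12\E_P\sqbr{\log(q/p)}}=\exp\br{-\tfrac12\KL\br{P\|Q}}.
\end{align*}
Squaring this and chaining it with the previous two displays gives $P(G^c)+Q(G)\geq\frac12\exp\cbr{-\KL\br{P\|Q}}$.

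The only real subtlety, and so the main obstacle, is measure-theoretic. On $\cbr{q=0,\,p>0}$ the log-ratio $\log(q/p)$ is $-\infty$; this set has $P$-measure zero whenever $\KL<\infty$, so Jensen applies to an extended-valued but integrable function. Everything else is routine.
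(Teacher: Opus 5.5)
Your proof is correct: it is the standard textbook argument, namely lower-bounding the testing error by the overlap $\int\min\{p,q\}\,d\mu$, applying Cauchy--Schwarz to reach the Bhattacharyya coefficient, and then using Jensen. The paper states this lemma without proof as a standard inequality, so there is no paper proof to compare against, and your treatment of the edge cases ($\KL=\infty$, and $P\ll Q$ when $\KL<\infty$) is adequate.
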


\end{document}